\documentclass{article}
\addtocontents{toc}{\protect\setcounter{tocdepth}{0}}
\usepackage[preprint]{neurips_2026}

\usepackage[utf8]{inputenc} 
\usepackage[T1]{fontenc}    

\usepackage[
  colorlinks=true,
  linkcolor={rgb,1:red,0;green,0;blue,0.5},
  citecolor={rgb,1:red,0;green,0.078;blue,0.451},
  urlcolor={rgb,1:red,0;green,0;blue,0.5}
]{hyperref}
\setcitestyle{round}
\usepackage{url}            
\usepackage{booktabs}       
\usepackage{amsfonts}       
\usepackage{nicefrac}       
\usepackage{microtype}      
\usepackage{xcolor}         
\usepackage{amsmath,amsthm}
\usepackage{multirow}
\usepackage{threeparttable}
\usepackage{algorithm}
\usepackage{algorithmic}
\usepackage{graphicx}
\usepackage{wrapfig}
\usepackage{enumitem}
\usepackage{caption}
\newtheorem{theorem}{Theorem}
\newtheorem{lemma}{Lemma}
\newtheorem{assumption}{Assumption}

\newtheorem{proposition}{Proposition}

\DeclareMathOperator*{\argmax}{arg\,max}

\title{Efficient LLM Adversarial Training via Low-Rank Defense and Circuit-Guided Surrogates}

\author{%
  Weiyi He \quad Yuping Lin \quad Jiliang Tang \quad Yue Xing \\
  \\
  Michigan State University \\
  \\
  \texttt{\{heweiyi, linyupin, tangjili, xingyue1\}@msu.edu} \\
}
\begin{document}

\maketitle

\begin{abstract}
Adversarial training is one of the most effective defenses against adversarial attacks, yet the computational cost remains prohibitive at modern scales, especially for large language models (LLMs). While existing mitigation strategies, e.g., latent adversarial training (LAT), have been developed, they still incur a high computational cost. In this work, we comprehensively investigate computation-efficient strategies to speed up LAT from two complementary perspectives: (1) Defense-side optimization: We explore the representation fine-tuning (ReFT) within LAT, and reveal a potential issue if there is a mismatch on which tokens to apply ReFT and the attack. (2) Attack-side optimization: When computing adversarial attacks in each LAT iteration, we extract only the relevant circuits from the LLM to construct a lightweight surrogate model, avoiding the computation in the forward-backward passes through the full model during the attack generation. For both perspectives, we provide theoretical justifications and numerical evidence to illustrate the effectiveness of the proposed strategies. Ultimately, compared to standard LAT with full fine-tuning, our method on average reduces per-step adversarial-training FLOPs by 48.1\% while requiring only 0.0118\% trainable parameters.

\end{abstract}

\section{Introduction}

Large language models (LLMs) have achieved remarkable performance across diverse tasks, yet remain critically vulnerable to adversarial attacks that manipulate model outputs via crafted prompts \citep{zou2023universal, yu2023gptfuzzer, andriushchenko2024jailbreaking}. These vulnerabilities pose serious risks for real-world deployments, making robust defense mechanisms essential. For example, recent reports have shown that indirect prompt injection can hijack tool-using LLM systems via poisoned external content, such as malicious calendar invites targeting Gemini, and similar flaws have also been exploited to induce unsafe downstream actions in AI coding agents \citep{wired_gemini_2025, verge_cline_2026}.

To ensure the robustness of LLMs, one of the most effective defenses is adversarial training \citep{xing2021algorithmic,zhao2024adversarial}, which trains LLMs on adversarial examples to help them better defend against adversarial inputs. However, scaling adversarial training to modern LLMs is computationally infeasible. The cost arises from two sources: (i) updating a large number of model parameters during training, and (ii) repeatedly running the full model to generate adversarial examples via iterative optimization in the token space \citep{zou2023universal,howe2024scaling}. Recent works have proposed latent adversarial training (LAT) \citep{xhonneux2024efficient, sheshadri2024latent, casper2024defending}, which perturbs continuous hidden representations rather than optimizing directly in the discrete token space. This makes the optimization substantially easier than token-level adversarial example generation, but the computational bottleneck remains significant: running the inner attack with Projected Gradient Descent (PGD) \citep{madry2017towards} for 8 steps alone accounts for approximately 80\% of the computation in LAT under the Kaplan FLOPs estimate \citep{kaplan2020scaling}.

\begin{wrapfigure}[]{r}{0.5\columnwidth}
    \centering
    \includegraphics[width=0.5\columnwidth]{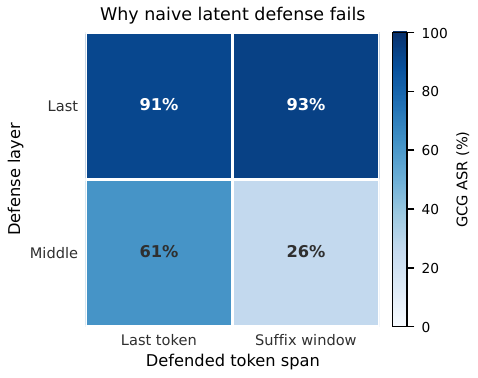}
    \caption{A naive latent defense that protects only the last token at the last layer is ineffective. Robustness improves substantially when the defense covers a suffix window and is moved away from the end of the model.}
    \label{fig:intro_placement_motivation}
\end{wrapfigure}

While parameter-efficient training strategies such as LoRA and representation fine-tuning (ReFT) \citep{wu2024reft, ren2025general, zeng2025towards} have been widely adopted, significant gaps persist when applying them in the context of LAT. \textbf{(G1)} On the defense side (i.e., updating model parameters to enhance robustness), ReFT requires substantially fewer trainable parameters than LoRA, making it an attractive candidate for efficient adversarial training. However, standard ReFT often applies interventions on task-relevant positions, sometimes only the last token at a specific hidden layer (usually not the final layer). Since adversarial attacks can perturb multiple tokens, it remains unclear whether such position-limited interventions can effectively defend multi-token attacks. \textbf{(G2)} On the attack side, while ReFT and other parameter-efficient methods reduce the cost of model parameter updates, they do not address the computational burden of attack generation itself: computing adversarial perturbations still requires iterative forward-backward passes through the full model at each training step. 

To address these gaps, we study adversarial robustness in classification settings with a focus on token-level suffix attacks, following \cite{howe2024scaling}. We systematically explore strategies for reducing LAT's computation cost from both defense and attack perspectives, both theoretically and empirically. Our contributions are as follows:

\begin{itemize}[leftmargin=*]
\item \textbf{Defense side:} We introduce LAT-ReFT, a parameter-efficient latent defense that integrates ReFT into LAT. To analyze \textbf{(G1)}, as in Figure~\ref{fig:intro_placement_motivation}, when the interventions are added on a suffix window of the attacked tokens, the attack success rate is much lower. We further theoretically prove that single-token defense is insufficient: attention mechanisms allow attacks from earlier suffix positions to leak into the defended output (Theorem~\ref{thm:causal_bypass}). We then derive conditions under which multi-token suffix window ReFT can provably suppress this leakage (Theorem~\ref{thm:seq_shared_defense}). Empirical results demonstrate the effectiveness of the proposed algorithm in reducing computation costs.

\item \textbf{Attack side:} Inspired by mechanistic interpretability research showing that model behaviors are determined by sparse circuits \citep{wang2022interpretability, elhage2022toy, hanna2024have, chen2024towards}, we develop a circuit-guided surrogate attack strategy to overcome \textbf{(G2)}: generating attacks on a pruned surrogate model and transferring them to the full model. The key challenge is balancing efficiency and transferability: naive pruning accelerates attack generation but severely degrades attack effectiveness. Inspired by previous idea \citep{molchanov2016pruning,molchanov2019importance,syed2024attribution}, we develop an activation-gradient importance scoring, ActGrad, to identify attack-critical neurons, and validate it theoretically in Theorem~\ref{thm:surrogate}. Empirical results show that pruning 25\% of MLP neurons preserves attack-relevant geometry while reducing inner-loop cost by about 20\%. Combined with our defense-side strategy, we achieve an overall computational reduction of 48.1\% on average compared to standard LAT with full fine-tuning.

\item \textbf{Unified insight:} Despite operating through different mechanisms, the defense and attack components reveal a common layerwise pattern: effective ReFT interventions should be placed in early or middle layers rather than very late layers, and the attack surrogate likewise retains most of its important neurons from early and middle layers. This observation is broadly consistent with recent interpretability evidence suggesting that late layers in Transformers often have limited functional complexity \citep{skean2025layer,queipo2025attention}, and it provides practical guidance for efficient adversarial training design. 
\end{itemize}

\section{Related Work}
\paragraph{Adversarial Training for LLMs.}
Adversarial training (AT) is a widely used approach for improving model robustness by optimizing performance under worst-case perturbations \citep{madry2017towards,shafahi2019adversarial,wong2020fast,andriushchenko2020understanding}. In the context of LLMs, AT typically involves generating adversarial prompts in the discrete token space, which incurs high computational cost \citep{ebrahimi2018hotflip,zou2023universal,howe2024scaling}. To address this challenge, recent work has proposed Continuous Adversarial Training (CAT), where adversarial perturbations are applied in the embedding space \citep{xhonneux2024efficient, dekany2025mixat}. This allows efficient gradient-based optimization, reducing the cost of adversarial example generation. Subsequent work further extends this framework to attacking the latent representations in the intermediate LLM layers \citep{sheshadri2024latent, casper2024defending}. Recent theoretical analyses also provide justification for the effectiveness of such continuous perturbations \citep{fu2026understanding}.

\paragraph{Representation Engineering.}
Representation engineering studies how model behavior can be controlled by operating directly on internal activations, rather than by updating all model parameters \citep{zou2023representation}. Prior work has shown that hidden representations in LLMs encode rich semantic and behavioral information, and modifying these representations can effectively steer model outputs \citep{turner2023steering, zheng2024prompt, lin2024towards, arditi2024refusal}. Within this line of work, Representation finetuning (ReFT) provides a parameter-efficient adaptation mechanism by learning localized low-rank interventions in activation space \citep{wu2024reft}. Subsequent studies further show that such representation-level updates can serve as an effective alternative to full-parameter finetuning \citep{ren2025general, zeng2025towards}. This is closely related to our setting, since latent adversarial defense also operates directly on internal representations and therefore benefits naturally from parameter-efficient activation-level interventions.

\paragraph{Mechanistic Interpretability and Circuits.}
Mechanistic interpretability aims to identify the internal components and computational pathways responsible for specific model behaviors \citep{chen2024towards, lee2024mechanistic,sharkey2025open}. A recurring finding in this literature is that many behaviors in LLMs are mediated by relatively sparse circuits or localized subnetworks, rather than being uniformly distributed across all parameters \citep{wang2022interpretability, hanna2024have, patel2025alignment}. To study such locality, prior work has developed methods for localizing and validating important components \citep{vig2020investigating, frantar2023sparsegpt}.  In our work, these findings motivate reducing the computation used in adversarial optimization by focusing on the model components that are most relevant to the expression of adversarial behaviors.

\section{Preliminaries}

\paragraph{Transformers.}
A decoder-only Transformer \citep{vaswani2017attention} maps an input token sequence $x=[x_1,\dots,x_T]$ to output logits. Let $\mathbf{h}_t^l \in \mathbb{R}^d$ denote the hidden representation of token $t$ at layer $l \in [L_\mathrm{all}]$. Each layer contains a self-attention block and an MLP block with residual connections. For notational simplicity, we write the layer update as
\(
\mathbf{h}_t^l = \mathbf{h}_t^{l-1} + \mathbf{a}_t^l + \mathbf{m}_t^l,
\)
where $\mathbf{a}_t^l$ and $\mathbf{m}_t^l$ denote the attention and MLP updates, respectively.

\paragraph{Latent Adversarial Training (LAT).}
Let $f_\theta(x)$ denote the model logits for input $x$, parameterized by $\theta$, and let $\ell(f_\theta(x),y)$ be the classification loss for label $y$. Standard adversarial training seeks parameters that are robust to worst-case perturbations:
\(
\min_\theta \; \mathbb{E}_{(x,y)}
\left[
\max_{x' \in \mathcal{B}(x,\varepsilon)} \ell(f_\theta(x'), y)
\right],
\)
where $\mathcal{B}(x,\varepsilon)$ denotes the set of perturbations around $x$ under a perturbation budget $\varepsilon$.

To save the cost of calculating the discrete attacked tokens, LAT \citep{sheshadri2024latent, casper2024defending} instead performs the inner maximization in a continuous hidden space. Let $\mathbf{H}_l(x;\theta) \in \mathbb{R}^{T \times d}$ denote the hidden-state tensor at layer $l$ for the input $x$. Rather than perturbing the discrete input directly, LAT optimizes over hidden representations in a neighborhood of $\mathbf{H}_l(x;\theta)$:
\begin{equation}
\min_\theta \; \mathbb{E}_{(x,y)}
\left[
\ell(f_\theta(x), y)
+
\lambda_{\mathrm{adv}}
\max_{\widetilde{\mathbf H}\in \mathcal{B}(\mathbf{H}_l(x;\theta),\varepsilon)}
\ell\bigl(f_\theta(x;\widetilde{\mathbf H}), y\bigr)
\right].
\end{equation}
Here, $\mathcal{B}(\mathbf{H}_l(x;\theta),\varepsilon)$ denotes the neighborhood in hidden space, and $f_\theta(x;\widetilde{\mathbf H})$ denotes the model output when the hidden representation at layer $l$ is replaced by $\widetilde{\mathbf H}$. Since the inner optimization is carried out in continuous representation space, it can be efficiently approximated using PGD. 

\paragraph{Representation Finetuning (ReFT).}
ReFT \citep{wu2024reft} restricts model updates to a low-rank subspace of the hidden representation. Let $\mathbf{R} \in \mathbb{R}^{r \times d}$ be a matrix with orthonormal rows, where $r \le d$ is the subspace rank. Given trainable parameters $\mathbf{W} \in \mathbb{R}^{r \times d}$ and $\mathbf{b} \in \mathbb{R}^r$, the ReFT operator applied to a hidden state $\mathbf{h} \in \mathbb{R}^d$ is
\(
\Phi_{\mathbf{R}}(\mathbf{h}) = \mathbf{h} + \mathbf{R}^\top(\mathbf{W}\mathbf{h} + \mathbf{b} - \mathbf{R}\mathbf{h}).
\)

\section{Method}

Our goal is to make adversarial training for LLMs both effective and efficient. On the defense side, we introduce LAT-ReFT, a parameter-efficient defense that restricts adaptation to a low-rank subspace of the hidden representation. On the attack side, we introduce a circuit-guided surrogate that reduces the cost of inner-loop adversarial optimization while being transferable to the full defended model.

\subsection{Defense: LAT-ReFT}
\label{sec:lat-reft}

We integrate ReFT into LAT to obtain a parameter-efficient latent defense by freezing the backbone and training only a low-rank intervention in representation space. Let $l_{\mathrm a}$ denote the attack layer and $l_{\mathrm r}$ the defense layer, with $l_{\mathrm r}\ge l_{\mathrm a}$. Let $\mathcal T_{\mathrm{LAT}}$ denote the token positions where latent perturbations are allowed. Consider perturbations $\Delta_{\mathcal T}\in\mathbb R^{T\times d}$, supported only on $\mathcal T_{\mathrm{LAT}}$, i.e., $(\Delta_{\mathcal T})_{t,:}=0$ for all $t\notin\mathcal T_{\mathrm{LAT}}$. Thus, the attack is restricted to the selected token positions at layer $l_{\mathrm a}$, while ReFT is applied at layer $l_{\mathrm r}$ to correct the resulting adversarial features.
Let $f_{\mathbf R,\mathbf W,\mathbf b}(x;\widetilde{\mathbf H})$ denote the model output when the hidden representation at layer $l_{\mathrm a}$ is replaced by $\widetilde{\mathbf H}$ and the ReFT module parameterized by $(\mathbf R,\mathbf W,\mathbf b)$ is applied at layer $l_{\mathrm r}$. The defense objective is
\begin{equation}
\label{Eq:objective}
\min_{\mathbf R,\mathbf W,\mathbf b}
\;
\mathbb E_{(x,y)}
\left[
\ell\!\left(f_{\mathbf R,\mathbf W,\mathbf b}(x),y\right)
+
\lambda_{\mathrm{adv}}
\max_{\widetilde{\mathbf H}\in
\mathcal B(\mathbf H_{l_{\mathrm a}}(x;\theta),\varepsilon;\mathcal T_{\mathrm{LAT}})}
\ell\!\left(
f_{\mathbf R,\mathbf W,\mathbf b}(x;\widetilde{\mathbf H}),
y
\right)
\right].
\end{equation}
Here, $\mathcal B(\mathbf H_{l_{\mathrm a}}(x;\theta),\varepsilon;\mathcal T_{\mathrm{LAT}})$ denotes the hidden-space neighborhood obtained by restricting perturbations to the token positions in $\mathcal T_{\mathrm{LAT}}$. 

A central design issue for ReFT-based latent defense is \emph{where} to place the intervention. As illustrated in Figure~\ref{fig:intro_placement_motivation}, this choice has two parts: (1) selecting the defended suffix window and (2) selecting the intervention layer. We discuss these two design choices in turn below.

\paragraph{Suffix-window defense.}
A ReFT intervention applied only at the last token is not sufficient in our setting. Empirically, restricting the defense to a single position leaves the model vulnerable to suffix-style attacks, indicating that the intervention is too local. 
We therefore defend a contiguous suffix window rather than a single token:
\(
\mathcal T_{\mathrm{LAT}}=\{T-L+1,\dots,T\}.
\)
We provide theoretical support for this design in Section~\ref{sec:theory}. To supplement, we also provide results for prefix attacks and present that a suffix window can still help defend against prefix attacks; see Appendix~\ref{app:prefix_attack}.

\paragraph{Defense layer.}
Prior latent-defense work typically treats layer choice empirically \citep{sheshadri2024latent, casper2024defending}, while existing analyses suggest that Transformer layers can be roughly grouped into early, middle, and late stages with different roles \citep{skean2025layer}. Guided by both perspectives, we compare intervention layers from these regions in Section~\ref{sec:exp_design}. Our results show that late-layer intervention is generally ineffective, and ReFT should be away from the end of the model.

\subsection{Circuit-Guided Surrogate for Attack Generation}
\label{sec:surrogate}

Another problem in LAT is the cost of repeatedly solving the inner maximization in Eq.~\eqref{Eq:objective}. Even though latent optimization is cheaper than discrete token-space search, generating adversarial perturbations still requires many PGD steps with repeated forward--backward passes through a large model. To reduce this cost, we introduce a circuit-guided surrogate. 

Specifically, we use a surrogate model for inner-loop attack generation: the latent perturbation is optimized on the surrogate and then applied to the full defended model for training. We construct the surrogate by compressing the MLP blocks of the full model. MLP blocks are a natural target because they contain about two-thirds of a Transformer's parameters and account for much computation. They are also straightforward to compress in a structured way, as their expanded hidden layer is organized neuron-wise, so we can retain only a subset of neurons and prune the rest \citep{geva2021transformer}.
To select neurons to retain, we assign each MLP neuron an importance score inspired by attribution analyses of internal model components, which use activation- and gradient-based signals to estimate component importance \citep{syed2024attribution}. Concretely, for neuron $j$ in layer $l$, we define
\begin{equation}
s_{l,j}
=
\sum_{(x,y)\in\mathcal D}
\sum_t
\left|
a_{t,j}^{(l)}(x)\,
\frac{\partial \ell(f_\theta(x),y)}{\partial a_{t,j}^{(l)}}
\right|,
\end{equation}
where $a_{t,j}^{(l)}(x)$ denotes the activation of neuron $j$ at token $t$ in the expanded hidden layer of the MLP block at layer $l$, and $\mathcal D$ is a small calibration set drawn from the clean training data. As will be discussed in Theorem \ref{thm:surrogate}, this score favors neurons that are not only active, but also influential for changing the adversarial loss.
We rank neurons globally by this score, retain only the top-scoring fraction, and prune the rest when running the inner-loop attack.

Let \(f_{\mathrm{surr}}\) denote the surrogate; we run PGD on it to obtain an adversarial hidden-state perturbation
\begin{equation}
\widetilde{\mathbf H}_{\mathrm{surr}}^\star
=
\argmax_{\widetilde{\mathbf H}\in
\mathcal B(\mathbf H_{l_{\mathrm a}}(x;\theta),\varepsilon;\mathcal T_{\mathrm{LAT}})}
\ell\bigl(f_{\mathrm{surr}}(x;\widetilde{\mathbf H}),y\bigr),
\end{equation}
and transfer the resulting perturbation back to the full defended model. In Section~\ref{sec:exp_surrogate}, we show that this strategy reduces attack-time computation while transferred well to the full defended model.

\section{Theoretical Analysis}
\label{sec:theory}
In this section, we provide theoretical analysis for the suffix-window defense and the surrogate model. We write $\mathbf h_t^{\,l_{\mathrm r}}$ for the hidden vector at token $t$, i.e., the $t$-th row of $\mathbf H_{l_{\mathrm r}}(x;\theta)$. Let $\alpha_{T,t}^l$ and $\hat{\alpha}_{T,t}^l$ denote the attention weights from query position $T$ to token $t$ at layer $l$ in the clean and defended forward passes, respectively, where the clean pass is unperturbed, and the defended pass includes latent perturbation together with ReFT intervention. Here, \(\tilde{\mathbf h}\) denotes the perturbed hidden state before ReFT, and \(\hat{\mathbf h}\) denotes the defended hidden state after ReFT.
We also define the clean and defended value vectors by
\(
\mathbf v_t^l=\mathbf W_V^l \mathbf h_t^{\,l-1}
\)
and
\(
\hat{\mathbf v}_t^l=\mathbf W_V^l \hat{\mathbf h}_t^{\,l-1}.
\)
Here $\mathbf W_Q^l,\mathbf W_K^l,\mathbf W_V^l$ are the query, key, and value matrices of the attention head at layer $l$, with
\(
\|\mathbf W_Q^l\|\le B_Q,\;
\|\mathbf W_K^l\|\le B_K,\;
\|\mathbf W_V^l\|\le B_V
\) for some finite constants $B_Q, B_K, B_V$ 
\citep{edelman2022inductive,he2025impact}. Unless otherwise specified, $\|\cdot\|$ denotes the $\ell_2$ norm for vectors and the spectral norm for matrices. Additional technical details are deferred to Appendix~\ref{app:math_setup}. Besides, we also impose the following assumption:

\begin{assumption}[Local Lipschitz continuity]
\label{ass:lipschitz_attention}
Let $\mathbf{o}_T^l$ denote the attention output at token T in layer l. Assume that the map from the hidden-state tuple
$(\mathbf{h}_1^{\,l_{\mathrm r}},\dots,\mathbf{h}_T^{\,l_{\mathrm r}})$ to $\mathbf{o}_T^l$
is locally Lipschitz in a neighborhood of the clean trajectory, and let \(L_\alpha\) denote the Lipschitz constant. 
\end{assumption}
Assumption~\ref{ass:lipschitz_attention} ensures that small perturbations in the hidden states induce controlled changes in the downstream attention output. It is standard for Transformer blocks composed of linear maps and softmax-based attention in a bounded neighborhood of the clean trajectory \citep{kim2021lipschitz}.

\subsection{Suffix-Window Defense}
We first analyze the limitation of defending only position $T$. By investigating the layer immediately following the defense layer, namely $l=l_{\mathrm r}+1$, we study how residual perturbations at $l_{\mathrm r}$ propagate to the attention output $\mathbf{o}_T^l$ at the final position $T$ through attention aggregation. Let \(
\mathcal{T}_{\mathrm{atk}}=\{T-k+1,\dots,T\}
\) denote an attacked suffix index set with $k\ge 1$ such that at layer $l_{\mathrm{r}}$ the perturbed hidden states satisfy
\(
\tilde{\mathbf{h}}_t^{\,l_{\mathrm{r}}}=\mathbf{h}_t^{\,l_{\mathrm{r}}}+\Delta_t, 
\)
for all $t\in\mathcal{T}_{\mathrm{atk}}$, and $\Delta_t=\mathbf{0}$ for $t\notin\mathcal{T}_{\mathrm{atk}}$.

\begin{theorem}[Last-token defense is insufficient]
\label{thm:causal_bypass}
Under Assumption~\ref{ass:lipschitz_attention}, further assume that the ReFT operator \(\Phi_{\mathbf R}\) is applied only at position \(T\) and achieves
\(
\|\hat{\mathbf h}_T^{\,l_{\mathrm r}}-\mathbf h_T^{\,l_{\mathrm r}}\|\le \varepsilon_T
\), where $\varepsilon_T$ measures how close the defended hidden state at T is to the clean hidden state after ReFT intervention.
Since ReFT is applied only at position \(T\), the remaining attacked suffix positions are left uncorrected:
\(
\hat{\mathbf h}_t^{\,l_{\mathrm r}}
=
\mathbf h_t^{\,l_{\mathrm r}}+\Delta_t,
t\in\mathcal T_{\mathrm{atk}}\setminus\{T\}.
\)
Let $\hat{\mathbf o}_T^{l}$ denote the defended attention outputs at $T$ in layer $l$. Let $\mathcal E_T:=\alpha_{T,T}^{l}\|\mathbf W_V^{l}\|\,\varepsilon_T$ and let
\(
\mathcal E_{\mathrm{attn}}
=
\sum_{t=1}^{T}
\bigl|
\hat{\alpha}_{T,t}^{l}-\alpha_{T,t}^{l}
\bigr|
\cdot
\|\hat{\mathbf v}_t^{l}\|.
\)
Then
\begin{equation}
\label{eq:causal_leakage_bounded_main}
\bigl\|
\hat{\mathbf o}_T^{l}-\mathbf o_T^{l}
\bigr\|
\;\ge\;
\left\|
\sum_{t=T-k+1}^{T-1}
\alpha_{T,t}^{l}\,\mathbf W_V^{l}\Delta_t
\right\|
-
\mathcal E_{\mathrm{attn}}
-
\mathcal E_{T}.
\end{equation}
\end{theorem}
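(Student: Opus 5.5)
The plan is a direct decomposition of the attention output difference followed by the reverse triangle inequality. For a single head at layer \(l=l_{\mathrm r}+1\) I will write
\(
\mathbf o_T^{l}=\sum_{t=1}^{T}\alpha_{T,t}^{l}\mathbf v_t^{l}
\)
and
\(
\hat{\mathbf o}_T^{l}=\sum_{t=1}^{T}\hat\alpha_{T,t}^{l}\hat{\mathbf v}_t^{l}.
\)
The key algebraic step is to add and subtract \(\sum_t \alpha_{T,t}^{l}\hat{\mathbf v}_t^{l}\), which gives
\begin{equation*}
\hat{\mathbf o}_T^{l}-\mathbf o_T^{l}
=\sum_{t=1}^{T}\alpha_{T,t}^{l}\bigl(\hat{\mathbf v}_t^{l}-\mathbf v_t^{l}\bigr)
+\sum_{t=1}^{T}\bigl(\hat\alpha_{T,t}^{l}-\alpha_{T,t}^{l}\bigr)\hat{\mathbf v}_t^{l}.
\end{equation*}
This splits the change into a value-shift part, evaluated with the clean attention weights, and an attention-reweighting part, evaluated with the defended values.

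Next I will evaluate the value-shift term position by position, using \(\hat{\mathbf v}_t^{l}-\mathbf v_t^{l}=\mathbf W_V^{l}(\hat{\mathbf h}_t^{\,l_{\mathrm r}}-\mathbf h_t^{\,l_{\mathrm r}})\).
\begin{itemize}
\item For \(t\notin\mathcal T_{\mathrm{atk}}\), the hidden state is unperturbed (\(\Delta_t=\mathbf 0\)) and ReFT is not applied there, so the difference vanishes.
\item For \(t\in\mathcal T_{\mathrm{atk}}\setminus\{T\}\), the hypothesis of the theorem gives exactly \(\mathbf W_V^{l}\Delta_t\).
\item For \(t=T\), the difference is \(\mathbf W_V^{l}(\hat{\mathbf h}_T^{\,l_{\mathrm r}}-\mathbf h_T^{\,l_{\mathrm r}})\). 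Its norm is at most \(\|\mathbf W_V^{l}\|\varepsilon_T\), so the weighted contribution is at most \(\alpha_{T,T}^{l}\|\mathbf W_V^{l}\|\varepsilon_T=\mathcal E_T\).
\end{itemize}
The value-shift term therefore equals the leakage sum \(S:=\sum_{t=T-k+1}^{T-1}\alpha_{T,t}^{l}\mathbf W_V^{l}\Delta_t\) plus a remainder of norm at most \(\mathcal E_T\).

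For the reweighting term, the triangle inequality bounds its norm by \(\sum_t|\hat\alpha_{T,t}^{l}-\alpha_{T,t}^{l}|\,\|\hat{\mathbf v}_t^{l}\|=\mathcal E_{\mathrm{attn}}\). Applying the reverse triangle inequality to \(S+\text{(remainder)}+\text{(reweighting)}\) then gives \(\|\hat{\mathbf o}_T^{l}-\mathbf o_T^{l}\|\ge\|S\|-\mathcal E_T-\mathcal E_{\mathrm{attn}}\), which is \eqref{eq:causal_leakage_bounded_main}.

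The main obstacle is only bookkeeping, specifically fixing the conventions.
\begin{itemize}
\item Indexing: the values at layer \(l\) must be built from the layer-\(l_{\mathrm r}\) hidden states, which holds because \(l-1=l_{\mathrm r}\).
\item Output projection: any output projection, or a multi-head concatenation, has to be absorbed into \(\mathbf W_V^{l}\), as in the appendix setup.
\end{itemize}
Assumption~\ref{ass:lipschitz_attention} is not needed for the inequality itself. It serves only to guarantee that \(\mathcal E_{\mathrm{attn}}\) is finite and small, of order \(L_\alpha\) times the perturbation size, so that the bound is meaningful. I would remark on this after the proof rather than use it inside it.
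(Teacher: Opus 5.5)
Your proof is correct and follows the paper's argument essentially step for step: the same add-and-subtract split into a value-shift term with clean weights and a reweighting term with defended values, the same position-by-position isolation of \(\sum_{t=T-k+1}^{T-1}\alpha_{T,t}^{l}\mathbf W_V^{l}\Delta_t\) plus an \(\mathcal E_T\)-sized remainder, and the same reverse triangle inequality. Your remark about Assumption~\ref{ass:lipschitz_attention} is also accurate: the paper invokes it only afterwards, to bound \(\mathcal E_{\mathrm{attn}}\le C_{\mathrm{attn}}\bigl(\varepsilon_T+\sum_{t=T-k+1}^{T-1}\|\Delta_t\|\bigr)\), and the stated inequality does not need that bound.
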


The first term on the right-hand side of Eq.~\eqref{eq:causal_leakage_bounded_main} captures the contribution of upstream perturbations through the value path. The term \(\mathcal E_T\) reflects imperfect ReFT intervention at position \(T\), while \(\mathcal E_{\mathrm{attn}}\) captures perturbation-induced changes in the attention weights through the query/key path. Theorem~\ref{thm:causal_bypass} shows that defending only position \(T\) does not generally block adversarial information from earlier suffix positions. Beyond this attention-level leakage, a more refined analysis also shows that a ReFT module trained only at \(T\) does not generally transfer to earlier positions. See Appendix~\ref{app:Pf_of_Th1} and~\ref{app:generalize}.

In contrast to the above limitation, we next consider a sequence-shared ReFT operator \(\Phi_{\mathrm{shared}}\) trained jointly over the defended suffix window \(\mathcal T_{\mathrm{LAT}}\).

\begin{theorem}[Sequence-shared defense over a suffix window]
\label{thm:seq_shared_defense}
Under Assumption~\ref{ass:lipschitz_attention}, suppose $\Phi_{\mathrm{shared}}$ satisfies
\(
\|\Phi_{\mathrm{shared}}(\tilde{\mathbf h}_t^{\,l_{\mathrm r}})-\mathbf h_t^{\,l_{\mathrm r}}\|
\le \varepsilon_{\mathrm{LAT}},
\)
for all $t \in \mathcal T_{\mathrm{LAT}}$. Here \(\varepsilon_{\mathrm{LAT}}\) quantifies how accurately the shared ReFT operator restores the clean hidden states across the defended suffix window.
Further assume that the defended value vectors satisfy
\(
\|\hat{\mathbf v}_t^l\|\le M_V
\) for some $M_V$
for any $t\in[T]$.
For any token-level attack supported on a suffix set $\mathcal T_{\mathrm{atk}}$ such that $\mathcal T_{\mathrm{atk}}\subseteq \mathcal T_{\mathrm{LAT}}$, the defended attention output at position $T$ in layer $l$ satisfies
\begin{equation}
\|\hat{\mathbf o}_T^l-\mathbf o_T^l\|
\le
\varepsilon_{\mathrm{LAT}}
\bigl(
B_V + M_VL_\alpha(B_Q+kB_K)
\bigr).
\end{equation}
\end{theorem}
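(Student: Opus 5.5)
We decompose the change in the attention output into a value-path term and an attention-weight term, and bound each separately. Write $\mathbf o_T^l=\sum_{t=1}^T\alpha_{T,t}^l\mathbf v_t^l$ and $\hat{\mathbf o}_T^l=\sum_t\hat\alpha_{T,t}^l\hat{\mathbf v}_t^l$. Adding and subtracting $\sum_t\alpha_{T,t}^l\hat{\mathbf v}_t^l$ gives
\begin{equation*}
\hat{\mathbf o}_T^l-\mathbf o_T^l
=\sum_{t=1}^T\alpha_{T,t}^l\,\mathbf W_V^l\bigl(\hat{\mathbf h}_t^{\,l_{\mathrm r}}-\mathbf h_t^{\,l_{\mathrm r}}\bigr)
+\sum_{t=1}^T\bigl(\hat\alpha_{T,t}^l-\alpha_{T,t}^l\bigr)\hat{\mathbf v}_t^l .
\end{equation*}
This is the same split used for Theorem~\ref{thm:causal_bypass}, except that we now take upper bounds instead of a reverse triangle inequality.

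\textbf{Value term.} Since $\mathcal T_{\mathrm{atk}}\subseteq\mathcal T_{\mathrm{LAT}}$, every attacked position is passed through $\Phi_{\mathrm{shared}}$. Positions outside $\mathcal T_{\mathrm{atk}}$ carry no perturbation, so they contribute zero. Here we use (and state as a convention) that at positions in $\mathcal T_{\mathrm{LAT}}\setminus\mathcal T_{\mathrm{atk}}$ the hypothesis $\|\Phi_{\mathrm{shared}}(\tilde{\mathbf h}_t)-\mathbf h_t\|\le\varepsilon_{\mathrm{LAT}}$ still applies with $\tilde{\mathbf h}_t=\mathbf h_t$. Hence $\|\hat{\mathbf h}_t^{\,l_{\mathrm r}}-\mathbf h_t^{\,l_{\mathrm r}}\|\le\varepsilon_{\mathrm{LAT}}$ for all $t$. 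The attention weights are a convex combination, $\sum_t\alpha_{T,t}^l=1$, so the value term is at most $B_V\varepsilon_{\mathrm{LAT}}$.

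\textbf{Attention term.} Bound it by $M_V\sum_t|\hat\alpha_{T,t}^l-\alpha_{T,t}^l|=M_V\|\hat\alpha_T^l-\alpha_T^l\|_1$. The softmax logits are $z_t=(\mathbf W_Q^l\mathbf h_T)^\top\mathbf W_K^l\mathbf h_t$, possibly scaled. Their perturbation splits into a query part, driven by the single perturbation at position $T$ and bounded by $B_Q\varepsilon_{\mathrm{LAT}}$ times a factor, and a key part, driven by the at most $k$ corrupted keys and bounded by $kB_K\varepsilon_{\mathrm{LAT}}$ times a factor. Assumption~\ref{ass:lipschitz_attention} absorbs the remaining factors (the norms of clean hidden states and the softmax Lipschitz constant) into $L_\alpha$. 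This gives $\|\hat\alpha_T^l-\alpha_T^l\|_1\le L_\alpha(B_Q+kB_K)\varepsilon_{\mathrm{LAT}}$. Combining the two terms yields the stated bound.

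\textbf{Main obstacle.} The delicate step is justifying the $(B_Q+kB_K)$ form of the weight perturbation from Assumption~\ref{ass:lipschitz_attention}. As stated, that assumption concerns the map to $\mathbf o_T^l$, not to the weights. I would interpret $L_\alpha$ as the local Lipschitz constant of the softmax weights with respect to the query and key vectors in the neighbourhood of the clean trajectory, consistent with how the paper's appendix sets it up. Under that reading:
\begin{itemize}
\item the change in query is at most $B_Q\varepsilon_{\mathrm{LAT}}$;
\item the total change in keys is at most $\sum_{t\in\mathcal T_{\mathrm{atk}}}B_K\varepsilon_{\mathrm{LAT}}$, since $|\mathcal T_{\mathrm{atk}}|\le k$.
\end{itemize}
One subtlety is that ReFT may also move positions in $\mathcal T_{\mathrm{LAT}}\setminus\mathcal T_{\mathrm{atk}}$ by up to $\varepsilon_{\mathrm{LAT}}$. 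If so, the key count would be $L$ rather than $k$. I would handle this by taking $k$ to count all positions where the defended state differs from the clean one, or by noting that the theorem implicitly has $k=|\mathcal T_{\mathrm{LAT}}|$ in the worst case.
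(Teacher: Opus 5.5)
Your proposal is correct and follows the paper's proof essentially step for step. It uses the same add-and-subtract split, bounds the value path by $B_V\varepsilon_{\mathrm{LAT}}$ via $\sum_t\alpha_{T,t}^l=1$, and bounds the weight path by $M_V L_\alpha(B_Q+kB_K)\varepsilon_{\mathrm{LAT}}$, with $L_\alpha$ read, as the appendix does, as a Lipschitz constant of the softmax weights in the query and keys. On the subtlety you flag, the paper simply asserts $\hat{\mathbf k}_t^l=\mathbf k_t^l$ for $t\notin\mathcal T_{\mathrm{atk}}$, so it tacitly assumes ReFT leaves the unattacked positions of $\mathcal T_{\mathrm{LAT}}$ exactly clean; your caveat that otherwise $k$ should become $L=|\mathcal T_{\mathrm{LAT}}|$ is a legitimate point about the statement, not a gap in your argument.
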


Theorem \ref{thm:seq_shared_defense} shows that sequence-shared ReFT over a suffix window controls perturbation propagation through both the value path and the attention-weight path, providing theoretical support for window-based defense rather than single-token defense. The full proof appears in Appendix~\ref{app:Pf_of_Th2}.

\subsection{Effectiveness of the Surrogate Model}
\label{subsec:surrogate}
While the above presents the effectiveness of our design on the defense side, in the following, we further give an informal theorem explaining why a pruned surrogate can remain effective for inner-loop attack generation. The formal version and proof are presented in Appendix~\ref{app:surrogate}.

\begin{theorem}[Surrogate approximation]
\label{thm:surrogate}
Let \(S\) denote the set of neurons retained by the surrogate. Under local regularity conditions, with probability at least $1-2Kp$ for $p\in (0,1/(2K))$, the PGD optimization gap between the full defended model and the surrogate over \(K\) steps satisfies
\begin{equation}
\sum_{s=1}^{K} G_s
\;\le\;
C_{\mathrm{surr}}\,\eta
\sum_{s=1}^{K} M_s(S)
\;+\;
o(K\eta),
\end{equation}
where \(G_s\) is the difference between the one-step improvement in the full inner objective obtained by the full PGD direction and that obtained by the surrogate PGD direction, \(C_{\mathrm{surr}}\) is a constant depending on the Jacobian bound from the attacked hidden state to the MLP expansion layer, \(\eta\) is the PGD step size, and
$
w_{s,j}
:=
\left|a_{s,j}^{(l_{\mathrm a})}(x)\,r_{s,j}^{(l_{\mathrm a})}\right|^2,
M_s(S)
:=
\left(
\sum_{j\notin S} w_{s,j}
\right)^{1/2},
$
where \(w_{s,j}\) measures the act\(\times\)grad contribution of neuron \(j\) at step \(s\), \(M_s(S)\) is the omitted act\(\times\)grad mass of the neurons pruned by the surrogate at step \(s\), and \(r_{s,j}^{(l_{\mathrm a})}\) denotes the corresponding MLP-path gradient component.
\end{theorem}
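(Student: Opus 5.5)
The approach is a per-step argument: bound each gap $G_s$ by a first-order Taylor expansion plus a remainder, then sum over the $K$ PGD steps. Fix step $s$ with current perturbed state $\widetilde{\mathbf H}_s$, and write $g_s=\nabla_{\widetilde{\mathbf H}}\ell(f(x;\widetilde{\mathbf H}_s),y)$ for the full-model gradient and $\tilde g_s$ for the surrogate gradient. The full and surrogate PGD directions are $d_s=\Pi(g_s)$ and $\tilde d_s=\Pi(\tilde g_s)$, where $\Pi$ is the normalized-gradient (or sign/projection) map used by PGD.

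Under local smoothness of the full inner objective $F$, the one-step improvement from direction $d$ is $\eta\langle g_s,d\rangle+O(\eta^2)$. Hence
\[
G_s\le \eta\,\langle g_s,d_s-\tilde d_s\rangle+O(\eta^2)\le \eta\,\|g_s\|\,\|d_s-\tilde d_s\|+O(\eta^2).
\]
If $\Pi$ is Lipschitz on the relevant region, for instance because $\|g_s\|$ is bounded away from zero for normalized steps, then $\|d_s-\tilde d_s\|\le C_\Pi\|g_s-\tilde g_s\|$. The $O(\eta^2)$ terms summed over $K$ steps give the $o(K\eta)$ remainder once $\eta\to0$.

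The core step is bounding $\|g_s-\tilde g_s\|$ by the omitted act$\times$grad mass $M_s(S)$. By the chain rule through the MLP expansion layer at $l_{\mathrm a}$, the gradient decomposes as a sum over neurons $j$ of $J_{s,j}^\top r_{s,j}$, where $J_{s,j}=\partial a_{s,j}/\partial\widetilde{\mathbf H}$ and $r_{s,j}$ is the downstream gradient, plus the attention/residual path shared by both models. Pruning zeroes the terms with $j\notin S$. I will argue that, under the local regularity condition, the downstream gradients in the surrogate coincide with those of the full model up to higher-order effects in the pruned mass. Then
\[
g_s-\tilde g_s\approx\sum_{j\notin S}J_{s,j}^\top r_{s,j}.
\]
To bring in the activation factor, I plan to use the gated/homogeneous structure $J_{s,j}=a_{s,j}\,u_{s,j}$ with $\|u_{s,j}\|$ controlled by the Jacobian bound; alternatively, I will simply assume the regularity condition $\|J_{s,j}\|\le C_J|a_{s,j}|$. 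Cauchy--Schwarz over neurons, together with the row structure, then gives
\[
\|g_s-\tilde g_s\|\le C_J\Bigl(\sum_{j\notin S}|a_{s,j}r_{s,j}|^2\Bigr)^{1/2}=C_J M_s(S).
\]
Collecting constants gives $C_{\mathrm{surr}}=C_\Pi C_J\sup_s\|g_s\|$.

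The probability $1-2Kp$ will come from a union bound. At each step, two events each fail with probability at most $p$: the full-model gradient norm lies in its bounded, nondegenerate range, and the downstream-gradient mismatch from pruning stays below its concentration threshold, for example when the gradients are estimated stochastically or the input is random. A union bound over $2$ events and $K$ steps gives failure probability $2Kp$, and $p<1/(2K)$ keeps this nontrivial. I expect the main obstacle to be justifying that the surrogate's downstream gradients $r_{s,j}$ match the full model's, since pruning alters later layers too. My first attempt is to treat that discrepancy as another Lipschitz term proportional to $M_s(S)$ and absorb it into $C_{\mathrm{surr}}$. If that fails, I will include it in the regularity assumption explicitly.
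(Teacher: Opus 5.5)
Your skeleton is the paper's: a first-order expansion of each one-step gain, reduction to $\|g_s-\tilde g_s\|$, the gradient difference written as the expansion-layer Jacobian applied to the pruned coordinates of the MLP-path gradient, summation over steps, and a union bound. However, the direction step fails for the PGD the paper actually analyzes. Algorithm~\ref{alg:main} and the proof use $\ell_\infty$ steps $u_s=\operatorname{sign}(g_s)$, and the sign map has no Lipschitz constant $C_\Pi$. A coordinate with $g_{s,i}=\epsilon$ and $\tilde g_{s,i}=-\epsilon$ gives $\|d_s-\tilde d_s\|\ge 2$ while $\|g_s-\tilde g_s\|=O(\epsilon)$, so the bound through $\|g_s\|\,\|d_s-\tilde d_s\|$ cannot be closed. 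The paper simply postulates local sign-stability (Assumption~\ref{ass:sign_stability}). You can instead prove it by never passing through $\|d_s-\tilde d_s\|$: with $\tilde u_s=\operatorname{sign}(\tilde g_s)$,
\[
\langle g_s,u_s-\tilde u_s\rangle=\|g_s\|_1-\langle g_s,\tilde u_s\rangle\le\|g_s\|_1-\|\tilde g_s\|_1+\|g_s-\tilde g_s\|_1\le 2\|g_s-\tilde g_s\|_1 ,
\]
so the assumption holds with $c=2\sqrt n$ for $n$ perturbed coordinates. The same optimality argument for normalized steps gives $2\|g_s-\tilde g_s\|$ with no lower bound on $\|g_s\|$, so your gradient-norm event is unnecessary.

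The activation step also does not go through as written. The factorization $J_{s,j}=a_{s,j}u_{s,j}$ with bounded $u_{s,j}$ is false for GELU or SwiGLU expansions: a GELU neuron at pre-activation $0$ has $a_{s,j}=0$ but derivative $1/2$. Your fallback row-wise bound $\|J_{s,j}\|\le C_J|a_{s,j}|$, combined with Cauchy--Schwarz, only yields $C_J\sqrt{|S^c|}\,M_s(S)$. What you need is an operator-norm bound. Let $J_{S^c}$ be the pruned rows of $\partial Z/\partial\widetilde{\mathbf H}$ and $D=\operatorname{diag}(a_{s,j})_{j\notin S}$. Then $\sum_{j\notin S}J_{s,j}^\top r_{s,j}=(D^{-1}J_{S^c})^\top(a_{s,j}r_{s,j})_{j\notin S}$, hence $\|g_s-\tilde g_s\|\le\|D^{-1}J_{S^c}\|\,M_s(S)$.

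This is where the paper's route differs from yours. It assumes the pruned activations are non-degenerate, $\min_{j\notin S}|a_{s,j}|\ge m_0$ with probability at least $1-p$ (Assumption~\ref{ass:nondeg_pruned}). It applies the spectral bound of Lemma~\ref{lem:jacobian_z} to the whole vector $(I-P_S)r_s$, and only then converts coordinatewise via $|r_{s,j}|\le|a_{s,j}r_{s,j}|/m_0$. That is exactly $\|D^{-1}J_{S^c}\|\le\|\partial Z/\partial\widetilde{\mathbf H}\|/m_0$. It yields a $C_{\mathrm{surr}}$ expressed through the Lemma~\ref{lem:jacobian_z} Jacobian bound, as the statement says. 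In the paper, it is this activation event (with sign-stability on the same event) that carries the probability, not gradient norms or a mismatch concentration.

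Finally, your first attempt at the downstream mismatch will not work. Removing neuron $j$ shifts the forward pass by $a_{s,j}$ times its output weights irrespective of $r_{s,j}$, and the surrogate prunes at every layer. So the change in downstream gradients is not controlled by the act$\times$grad mass at $l_{\mathrm a}$. The paper does not prove this either: it identifies the surrogate MLP-path gradient with $P_S r_s$ and drops the residual $\varepsilon_s$ in the ``ideal projected-gradient case''. Your fallback of stating it as an explicit assumption is therefore the right one.
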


Theorem~\ref{thm:surrogate} shows that surrogate quality is controlled by the omitted act\(\times\)grad mass \(M_s(S)\). 
At a fixed pruning ratio \(\rho\), an effective surrogate should therefore retain neurons with the largest \(w_{s,j}\). 
In contrast, if random pruning discards a fraction \(\rho\) of neurons uniformly at random, then for the retained set \(S_{\mathrm r}\),
\(
\mathbb E\!\left[M_s(S_{\mathrm r})^2\right]
=
\rho\sum_j w_{s,j}.
\)
Thus, random pruning removes a constant fraction of the total act\(\times\)grad mass in expectation, whereas ActGrad pruning preferentially keeps the largest-contributing neurons and therefore yields a smaller omitted mass at the same sparsity level.

\section{Experiments}

We evaluate the robustness--efficiency tradeoff of LAT-ReFT and test two key design choices: suffix-window placement and ActGrad-based surrogate pruning.
\begin{table*}[t]
\centering
\caption{Main results on IMDB across three representative models. We report clean accuracy (Acc) and attack success rate (ASR) under RandomToken and GCG. We also report training-efficiency statistics: trainable parameters as a percentage of total model parameters and per-step adversarial-training compute (FLOPs/step). See more results and FLOPs details in Appendix~\ref{app:full_results} and~\ref{app:computation}.}
\label{tab:main_imdb}
\setlength{\tabcolsep}{3.8pt}
\renewcommand{\arraystretch}{1.08}
\scriptsize
\begin{tabular}{llccc|cc}
\\
\toprule
& & \multicolumn{3}{c|}{\textbf{Performance}}
& \multicolumn{2}{c}{\textbf{Efficiency}} \\
\cmidrule(lr){3-5} \cmidrule(lr){6-7}
\textbf{Model} & \textbf{Method}
& \textbf{Acc}$\uparrow$
& \textbf{RandomToken}$\downarrow$
& \textbf{GCG}$\downarrow$
& \textbf{Params (\%)}$\downarrow$
& \textbf{FLOPs / step} ($\times 10^{12}$)$\downarrow$ \\
\midrule
\multirow{5}{*}{Llama-3.1-8B}
& No Defense    & 0.98 & 0.93 & 0.97 & --               & --      \\
& R2D2          & \textbf{1.00} & 0.08 & 0.58 & 100.0            & 3750.4    \\
& LAT           & 0.95 & \textbf{0.00} & \textbf{0.02} & 100.0            & 222.9    \\
& CAT           & 0.98 & 0.01 & 0.08 & 0.5559            & 176.8    \\
& \textbf{Ours} & 0.98 & 0.05 & 0.15
                 & \textbf{0.0066} & \textbf{139.8} \\
\midrule
\multirow{5}{*}{Qwen-2.5-3B}
& No Defense    & \textbf{1.00} & 0.98 & 0.99 & --               & --      \\
& R2D2          & 0.96 & 0.07 & 0.35 & 100.0            & 331.2    \\
& LAT           & 0.95 & 0.06 & \textbf{0.10} & 100.0            & 91.6   \\
& CAT           & 0.97 & \textbf{0.03} & 0.35 & 0.9608            & 72.7     \\
& \textbf{Ours} & 0.98 & 0.38 & 0.26
                 & \textbf{0.0085} & \textbf{39.2} \\
\midrule
\multirow{5}{*}{Pythia-1.4B}
& No Defense    & 0.97 & 0.73 & 0.97 & --               & --      \\
& R2D2          & \textbf{1.00} & 0.05 & 0.19 & 100.0            & 161.7     \\
& LAT           & 0.96 & \textbf{0.01} & \textbf{0.14} & 100.0            & 39.0    \\
& CAT           & 0.98 & 0.04 & 0.19 & 0.9505            & 30.9    \\
& \textbf{Ours} & 0.94 & 0.14 & 0.33
                 & \textbf{0.0203} & \textbf{19.6} \\
\bottomrule
\end{tabular}
\end{table*}
\subsection{Experimental Setups}
\paragraph{Datasets and Models.}
Following \citet{howe2024scaling}, we use three classification tasks: IMDB, EnronSpam and PasswordMatch. IMDB and EnronSpam are standard natural-language classification tasks for sentiment analysis and spam detection, respectively, and serve as realistic benchmarks in language understanding settings \citep{maas2011learning, metsis2006spam}. PasswordMatch is procedurally constructed tasks inspired by TensorTrust \citep{toyer2023tensor}, and provides a more controlled setting for studying adversarial robustness. We evaluate three pre-trained LLMs: Llama-3.1-8B \citep{grattafiori2024llama}, Qwen-2.5-3B \citep{qwen2.5} and Pythia-1.4B \citep{biderman2023pythia}.
For classification tasks, we attach a task-specific linear classification head to the last-token hidden state and fine-tune each model on clean data to obtain a task-adapted classifier for subsequent adversarial training \citep{howe2024scaling}. Detailed settings are provided in Appendix~\ref{app:datasets_ft}. 

\paragraph{Defenses and attack methods.}
We compare three baselines: R2D2, CAT \citep{xhonneux2024efficient}, and LAT \citep{sheshadri2024latent,casper2024defending}. For R2D2, we follow the version of \citet{howe2024scaling}, which maintains and continually refreshes a pool of previously generated adversarial examples during training rather than generating all attacks from scratch at every step \citep{mazeika2024harmbench}. CAT performs adversarial training in continuous embedding space by applying PGD-based perturbations to input token embeddings, whereas LAT perturbs the model's latent representations in intermediate layers during training. All baselines are implemented in our classification setting; additional details are provided in Appendix~\ref{app:base_defense}.
For our method, we implement LAT-ReFT with the ReFT intervention over a defended suffix window and use the ActGrad-pruned surrogate for inner-loop attack generation. The exact details are provided in Appendix~\ref{app:reft_training}.
We evaluate robustness primarily under token-level suffix attacks, which match the attack setting assumed by our method and theory. Specifically, we consider attacks that append adversarial suffixes: RandomToken \citep{howe2024scaling} and Greedy Coordinate Gradient (GCG) \citep{zou2023universal} as representative for random sampling and gradient-based optimization. Additional details are provided in Appendix~\ref{app:adv_attacks}.
    
\subsection{Main Results}
Table~\ref{tab:main_imdb} summarizes the main robustness--efficiency results. 
Full-parameter baselines such as LAT and R2D2 provide strong robustness, but require full-model updates and substantial attack-generation compute; R2D2 may further improve with more training rounds at higher offline cost. CAT reduces the number of trainable parameters while still performing full-model adversarial search. In contrast, our method uses only \(0.0066\%\)--\(0.0203\%\) trainable parameters and reduces per-step FLOPs by \(48.1\%\) on average compared with LAT. 
As expected, this efficiency gain comes with a loss in absolute robustness compared with full-parameter LAT. For example, on Llama-3.1-8B, LAT achieves the lowest GCG ASR of \(0.02\), while our method obtains \(0.15\) with much lower FLOPs and only \(0.0066\%\) trainable parameters. RandomToken is more model-dependent. On Qwen-2.5-3B, our method reduces ASR from \(0.98\) to \(0.38\), but remains less effective than under GCG. One possible reason is that RandomToken repeatedly samples perturbations over the suffix, inducing a broader and less gradient-aligned attack distribution than the perturbations used during training. These random suffixes appear to exploit a negative-to-positive class bias not fully exposed by GCG.
Overall, LAT-ReFT with a circuit-guided surrogate provides a lightweight alternative to full-parameter adversarial training with a favorable robustness--efficiency tradeoff.

\subsection{Design Analysis of the Defense}
\label{sec:exp_design}
\begin{figure}[t]
    \centering
    \includegraphics[width=\linewidth]{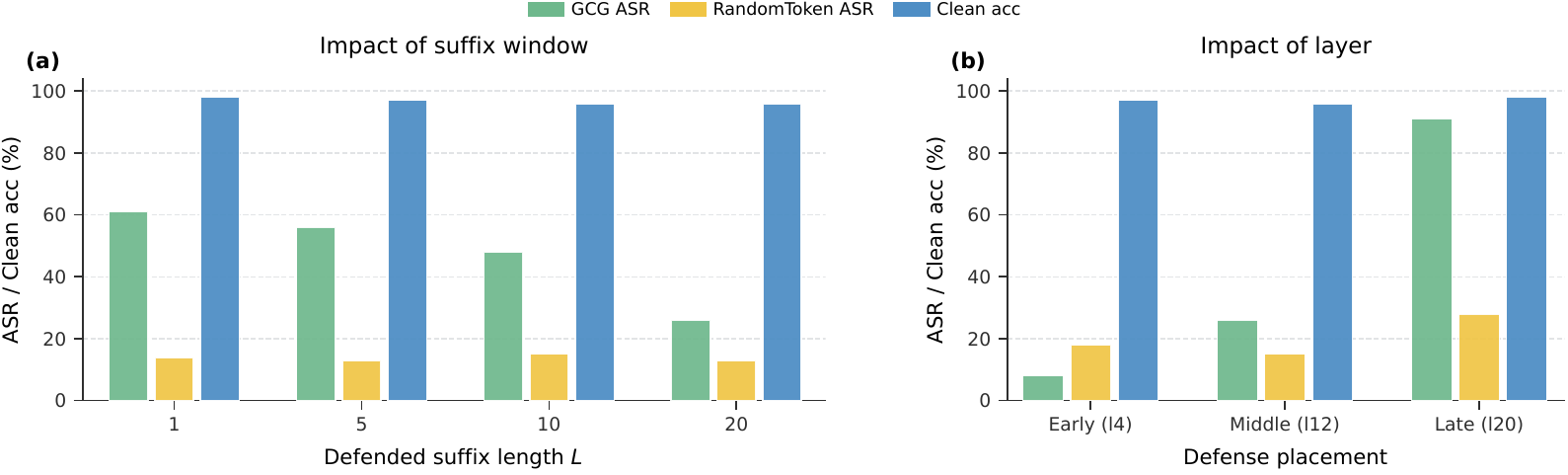}
    \caption{
    Defense-placement ablations on IMDB with Pythia-1.4B under GCG and RandomToken attacks.
    (a) Suffix-window length ablation with the defense layer fixed at \(l=12\).
    (b) Defense-layer ablation with the suffix length fixed at \(L=20\).
    Defending a suffix window and avoiding very late intervention layers both improve robustness, especially under GCG.
    }
    \label{fig:placement}
\end{figure}

We study the two factors of LAT-ReFT mentioned in Section \ref{sec:lat-reft}: the suffix window and the defense layer. Figure~\ref{fig:placement} summarizes the corresponding ablations.

\paragraph{Suffix-window size.}
Figure~\ref{fig:placement}(a) shows that defending only the final suffix token is insufficient, which is consistent with Theorem~\ref{thm:causal_bypass}. Specifically, as the defended suffix length $L$ increases from 1 to 20, the green bars (GCG ASR) decrease substantially, from 61\% to 26\%. This indicates that broader suffix coverage is important for suppressing adversarial leakage from nearby attacked positions, which aligns with Theorem \ref{thm:seq_shared_defense}. By contrast, yellow bars (RandomToken ASR) remain low and vary little with \(L\), likely because its random perturbations are less consistently aligned with the suffix leakage path characterized in Theorem~\ref{thm:causal_bypass}.
The blue bars (clean accuracy) also remain unchanged, showing that the robustness gains under GCG are not driven by sacrificing clean performance. 

\paragraph{Defense layer.}
Figure~\ref{fig:placement}(b) shows that placing the defense too late in the model is ineffective. Specifically, the green bars (GCG ASR) are much higher for late-layer intervention than for early or middle placement, while the yellow bars (RandomToken ASR) show the same qualitative trend at a lower overall level. In contrast, the blue bars (clean accuracy) remain consistently high across all three placements. To explain the worse performance of late layers, the adversarial perturbation has already propagated through many layers by the time it reaches the final layer, leaving less room for a low-rank intervention to remove it reliably. This is broadly consistent with a recent work where decoder-only Transformers organize computation into distinct depth-wise phases, with later layers acting more as selective refinement stages \citep{queipo2025attention}.

Overall, these results support both parts of our placement strategy: defending a suffix window rather than a single token, and avoiding very late-layer intervention when applying the latent defense.

\subsection{Analysis of Circuit-Guided Surrogate}
\label{sec:exp_surrogate}

We next analyze whether the circuit-guided surrogate can reduce inner-loop attack cost while preserving attack transferability. Figure~\ref{fig:surrogate_analysis} evaluates this design from three aspects: (1) the pruning-ratio tradeoff between training cost and attack success rate, (2) the impact of different neuron-selection rules, and (3) the resulting layer-wise pruning patterns.

\paragraph{Tradeoff between pruning and ASR.}
We first study the tradeoff between the pruning and ASR on IMDB with Pythia-1.4B under GCG attack as a representative setting. 
Figure~\ref{fig:surrogate_analysis}(a) reports the resulting ASR together with the total FLOPs per adversarial training step. The results show that moderate pruning remains effective: at 25\% pruning, the surrogate reduces FLOPs to \(0.804\times\) that of the unpruned model, while ASR increases only slightly from $0.26$ to $0.33$. In contrast, more aggressive pruning sharply degrades transfer: at 50\% pruning, ASR rises to $0.89$, and at 75\% pruning it further increases to $0.95$. We therefore use a pruning ratio of \(25\%\) in all remaining experiments. This degradation is expected: the surrogate is useful only if its PGD directions transfer to the full defended model. Moderate pruning preserves these adversarial search directions, whereas aggressive pruning distorts them and weakens transfer.

\paragraph{Selection rule comparison.}
We next compare three neuron-selection rules at the same pruning ratio of 25\%: random selection (serves as an unstructured pruning baseline), activation-difference scoring (ActDiff), and our activation-times-gradient (ActGrad) rule.
This tests whether surrogate quality depends on which neurons are retained, rather than only on the pruning ratio. ActDiff is motivated by the difference-in-means style of representation analysis widely used in mechanistic interpretability and steering-vector methods \citep{marks2023geometry}. Figure~\ref{fig:surrogate_analysis}(b) shows that the distinction is crucial. While the clean accuracies remain high for all methods, at the same pruning ratio, random pruning gives ASR $0.95$, and ActDiff performs similarly poorly at $0.97$. In contrast, ActGrad reduces ASR to $0.33$. This supports the effectiveness of ActGrad and aligns with Theorem~\ref{thm:surrogate}.

\begin{figure}[t]
    \centering
    \includegraphics[width=1\linewidth]{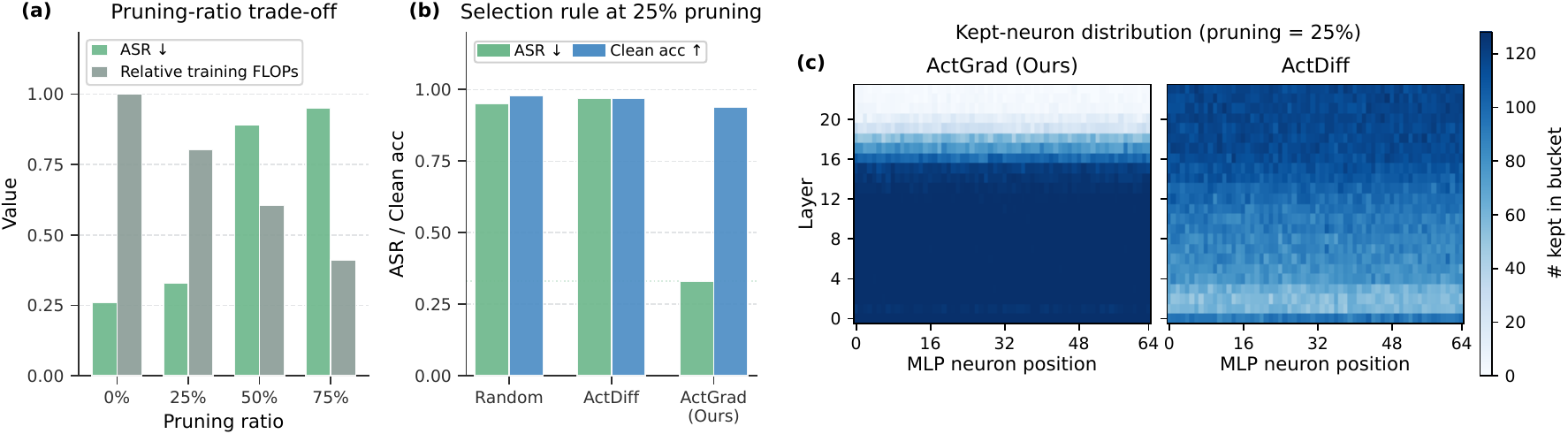}
    \caption{
Analysis of the circuit-guided surrogate on IMDB with Pythia-1.4B under GCG attack.
(a) Tradeoff between ASR and normalized training FLOPs across pruning ratios.
(b) Selection-rule comparison at $25\%$ pruning. ActGrad strongly outperforms random pruning and activation-difference scoring in ASR.
(c) Layer-by-neuron patterns at $25\%$ pruning for ActGrad and ActDiff.
}
    \label{fig:surrogate_analysis}
\end{figure}

\paragraph{Neuron pruning patterns.} 
Figure~\ref{fig:surrogate_analysis}(c) provides a more detailed view of the resulting patterns at the default setting $25\%$ pruning, using layer-by-neuron heatmaps for ActGrad and ActDiff. The two methods produce different structures: ActGrad preserves more capacity in earlier and middle layers, with a relatively sharp transition toward heavier pruning only in later layers. By contrast, ActDiff allocates its budget more diffusely and places comparatively more mass in later-layer regions. 
Together with the results in Figure~\ref{fig:surrogate_analysis}(b), this suggests that, in our setting, preserving earlier- and middle-layer MLP computation is more important for maintaining transferable adversarial directions. 
This may explain why ActDiff performs slightly worse than random pruning: its retained computation is less concentrated in earlier and middle layers. See more details in Appendix~\ref{app:surrogate_extra}. 

\section{Conclusion and Limitation}
\label{sec:conclusion}
We proposed an efficient latent adversarial training framework for LLMs that improves the robustness--efficiency tradeoff from both defense and attack sides. On the defense side, LAT-ReFT combines LAT with ReFT and shows that effective intervention should cover a suffix window and avoid very late layers. On the attack side, our circuit-guided surrogate retains high-importance MLP neurons using activation-gradient scores. Together, these components provide a lightweight alternative to full-parameter adversarial training, supported by theoretical analysis and empirical evidence. There are still some limitations in our work. First, we mainly study classification settings and suffix-style attacks, with preliminary prefix-attack results in Appendix~\ref{app:prefix_attack}; future work should extend to generation tasks and adaptive selection of defended token positions under broader jailbreak attacks. Second, our surrogate uses a fixed MLP pruning ratio, and adaptive pruning or broader circuit extraction may further improve the speed--transferability tradeoff.

\bibliographystyle{plainnat}
\bibliography{ref}


\clearpage
\appendix
\section*{Appendix}
\addtocontents{toc}{\protect\setcounter{tocdepth}{2}}

\tableofcontents 
\clearpage
\newpage
\section{Additional Experimental Setup}

\subsection{Datasets and Fine-tuning}
\label{app:datasets_ft}

We use 3 binary classification datasets \citep{howe2024scaling}.
The two natural-language datasets are filtered to examples with 100 to 1{,}000 GPT-2 tokens. The short-text datasets are filtered to examples with 5 to 50 tokens. From the length-filtered training split of each dataset, we sample up to 20{,}000 examples with a fixed random seed ($s = 42$) to form the fine-tuning training set (\texttt{ft\_train}). From the remaining training examples not selected into \texttt{ft\_train}, we further sample up to 100 examples as the attack set, which is disjoint from \texttt{ft\_train} by construction.
For evaluation, we apply the same length filter to the validation split (100–1{,}000 tokens for natural-language datasets; 5–50 tokens for short-text datasets) and retain \emph{all} examples that pass the filter without further subsampling. Table~\ref{tab:dataset_stats} summarizes the resulting split sizes. 
\begin{table*}[h]
\centering
\caption{Dataset statistics after length filtering.}
\label{tab:dataset_stats}
\small
\begin{tabular}{lrrrrrr}
\toprule
Dataset  & Train (raw) & Train (filtered) & \texttt{ft\_train} & Attack & Val (raw) & Val (filtered) \\
\midrule
IMDB          & 24,365 & 22,446 & 20,000 & 100 & 24,401 & 22,327 \\
EnronSpam     & 29,290 & 20,733 & 20,000 & 100 & 1,852  & 1,289  \\
PasswordMatch & 25,000 & 25,000 & 20,000 & 100 & 25,000 & 25,000 \\
\bottomrule
\end{tabular}
\end{table*}

We fine-tune each model by attaching a two-class linear classification head on top of the final hidden state of the last token and training end-to-end. All models are trained for 3 epochs using AdamW with a learning rate of $10^{-5}$ and linear decay to zero (no warmup). The effective batch size is 16 for all models (per-device batch size 4 with 4 gradient accumulation steps for 7B models).
Input sequences are truncated to 512 tokens for the natural-language datasets (IMDB, Enron) and to 64 tokens for the short-text datasets (PasswordMatch). 7B-scale models are trained in bfloat16. Table~\ref{tab:ft_acc} reports validation accuracy of the resulting classifiers. 

\begin{table*}[h]
\centering
\caption{Fine-tuned classifier validation accuracy (\%).}
\label{tab:ft_acc}
\setlength{\tabcolsep}{6pt}
\small
\begin{tabular}{lccc}
\toprule
\textbf{Dataset} & \textbf{Pythia-1.4B} & \textbf{Qwen2.5-3B} & \textbf{Llama3.1-8B} \\
\midrule
IMDB          & 94.4 & 96.5 & 97.1 \\
Enron         & 97.8 & 98.4 & 99.2 \\
PasswordMatch & 100.0  & 100.0  & 100.0 \\
\bottomrule
\end{tabular}
\end{table*}

\subsection{Evaluation Metric}
\label{app:evaluation_metric}

We evaluate each defended classifier using clean accuracy and attack success rate (ASR). 
Clean accuracy is computed on the unperturbed test set. For adversarial evaluation, we report ASR over the full evaluation set: the fraction of all evaluated examples that are correctly classified before attack and misclassified after attack. 

\subsection{LAT-ReFT Training Setup}
\label{app:reft_training}
Algorithm~\ref{alg:main} summarizes one training iteration of LAT-ReFT with the circuit-guided surrogate. For the main IMDB experiments, we train LoReFT interventions on top of task-specific fine-tuned classifiers initialized from the corresponding clean baseline checkpoint. Table~\ref{tab:imdb_reft_hparams} lists the hyperparameters used for the reported IMDB results with ReFT rank=64 for all models.

\begin{table}[h]
\centering
\caption{LAT-ReFT hyperparameters used for reported IMDB results.}
\label{tab:imdb_reft_hparams}
\footnotesize
\setlength{\tabcolsep}{4pt}
\begin{tabular}{lcccccc}
\toprule
Model & ReFT layer & Attack layer & Pos. & $\epsilon$ & PGD steps & $\lambda_{\mathrm{adv}}$ \\
\midrule
Pythia-1.4B & 12 & 4 & l20 & 0.2 & 8 & 1.0 \\
Llama-3.1-8B & 8 & 4 & l20 & 0.5 & 8 & 1.0 \\
Qwen2.5-3B & 18 & 18 & l20 & 0.5 & 8 & 1.0 \\
\bottomrule
\end{tabular}
\end{table}

\begin{algorithm}[h]
\caption{One iteration of LAT-ReFT with circuit-guided surrogate}
\label{alg:main}
\begin{algorithmic}[1]
\REQUIRE Pretrained model backbone \(f_\theta\), surrogate \(f_{\mathrm{surr}}\), attack layer \(l_{\mathrm a}\), defense layer \(l_{\mathrm r}\), defended token set \(\mathcal T_{\mathrm{LAT}}\), PGD steps \(K\), perturbation budget \(\varepsilon\), adversarial weight \(\lambda_{\mathrm{adv}}\), ReFT parameters \((\mathbf R,\mathbf W,\mathbf b)\), learning rate \(\eta\)
\STATE Freeze the backbone parameters \(\theta\); optimize only the ReFT parameters \((\mathbf R,\mathbf W,\mathbf b)\) at layer \(l_{\mathrm r}\)
\FOR{each minibatch \((x,y)\)}
    \STATE Compute clean hidden states \(\mathbf H_{l_{\mathrm a}}(x;\theta)\)
    \STATE Compute clean loss
    \[
    \mathcal L_{\mathrm{clean}}
    \leftarrow
    \ell\!\left(f_{\mathbf R,\mathbf W,\mathbf b}(x),y\right)
    \]

    \STATE Construct the surrogate with the current ReFT parameters at layer \(l_{\mathrm r}\)

    \STATE Initialize \(\widetilde{\mathbf H}^{(0)} \in \mathcal B(\mathbf H_{l_{\mathrm a}}(x;\theta),\varepsilon;\mathcal T_{\mathrm{LAT}})\)

    \FOR{\(s=0,\dots,K-1\)}
        \STATE Compute surrogate inner-loss gradient
        \[
        g^{(s)}
        \leftarrow
        \nabla_{\widetilde{\mathbf H}}
        \ell\!\left(f_{\mathrm{surr}}(x;\widetilde{\mathbf H}^{(s)}),y\right)
        \]
        \STATE Take one PGD step and project back to the latent attack set
        \[
        \widetilde{\mathbf H}^{(s+1)}
        \leftarrow
        \Pi_{\mathcal B(\mathbf H_{l_{\mathrm a}}(x;\theta),\varepsilon;\mathcal T_{\mathrm{LAT}})}
        \Bigl(
        \widetilde{\mathbf H}^{(s)}+\alpha\,\mathrm{sign}(g^{(s)})
        \Bigr)
        \]
    \ENDFOR

    \STATE Set \(\widetilde{\mathbf H}^{\star}\leftarrow \widetilde{\mathbf H}^{(K)}\)

    \STATE Compute adversarial loss on the full defended model
    \[
    \mathcal L_{\mathrm{adv}}
    \leftarrow
    \ell\!\left(f_{\mathbf R,\mathbf W,\mathbf b}(x;\widetilde{\mathbf H}^{\star}),y\right)
    \]

    \STATE Form the outer objective
    \[
    \mathcal L_{\mathrm{total}}
    \leftarrow
    \mathcal L_{\mathrm{clean}}
    +
    \lambda_{\mathrm{adv}}\,\mathcal L_{\mathrm{adv}}
    \]

    \STATE Update only the ReFT parameters
    \[
    (\mathbf R,\mathbf W,\mathbf b)
    \leftarrow
    (\mathbf R,\mathbf W,\mathbf b)
    -
    \eta\,
    \nabla_{(\mathbf R,\mathbf W,\mathbf b)}\mathcal L_{\mathrm{total}}
    \]
\ENDFOR
\end{algorithmic}
\end{algorithm}

\subsection{Baseline Defense Configurations}
\label{app:base_defense}

\paragraph{R2D2.}
We follow the R2D2 adversarial training setup of \citet{howe2024scaling} (Algorithm 1), including its dynamic adversarial pool construction and mixed clean/adversarial minibatch sampling strategy.
Each experiment runs for $R$ adversarial training rounds, where $R$ is chosen per model based on the available compute budget; specifically $R\!=\!5$ for Llama-3.1-8B, $R\!=\!4$ for Pythia-1.4B, and $R\!=\!7$ for Qwen-2.5-3B. In each round we attack $n_{\text{adv}}{=}200$ examples drawn from the attack split using GCG with a suffix of $N{=}10$ tokens, $B{=}128$ candidates per iteration, beam width $k{=}256$, and a linearly increasing iteration budget $k(r)=\mathrm{round}\!\left(k_{\text{start}}+\tfrac{r}{R_{\max}}(k_{\text{end}}-k_{\text{start}})\right)$ with $k_{\text{start}}{=}8$, $k_{\text{end}}{=}32$, $R_{\max}{=}8$, yielding $k\in\{11,14,17,20,23,26,29,32\}$ across rounds. Newly found adversarial examples are added to a persistent pool and resampled via exponential rank-weighting ($\lambda{=}0.005$) that jointly prioritises high-loss and recently generated examples. Each round's fine-tuning minibatch contains $n_{\text{aug}}{=}1000$ items ($80\%$ adversarial, $20\%$ clean), trained for $200$ gradient steps with AdamW (lr\,$=2{\times}10^{-5}$, batch size $8$, no weight decay; for Llama-3.1-8B we use lr\,$=5{\times}10^{-6}$ and batch size $4$ to prevent clean-accuracy collapse).
Total FLOPs reported in Table~\ref{tab:main_imdb} are computed as $C_{\text{adv}} = C_{\text{search}} + C_{\text{train}}$ using the Kaplan formula $C = 6ND$ with model parameter counts $N\in\{1.31\text{B},\,3.09\text{B},\,7.50\text{B}\}$ for Pythia-1.4B, Qwen-2.5-3B, and Llama-3.1-8B respectively; training FLOPs account for less than $3\%$ of the total in all cases.

\paragraph{CAT.}
Continuous Adversarial Training (CAT) applies projected gradient descent (PGD) \emph{in the embedding space} at every training step.
Given a mini-batch with token embeddings $\mathbf{E} \in \mathbb{R}^{B \times T \times d}$, the inner maximization runs $k$ PGD steps within an $\ell_2$ ball of radius $\varepsilon$:
\begin{equation*}
  \boldsymbol{\delta}^{*} = \operatorname*{arg\,max}_{\|\boldsymbol{\delta}\|_2 \le \varepsilon}
  \;\mathcal{L}\!\left(f_\theta(\mathbf{E} + \boldsymbol{\delta}),\, y\right),
\end{equation*}
and the outer minimization updates the model via
\begin{equation*}
  \min_\theta \;\mathcal{L}_{\mathrm{clean}} + \lambda\,\mathcal{L}_{\mathrm{adv}}.
\end{equation*}
In our implemented CAT baseline, we use LoRA adapters on top of a 4-bit quantized backbone rather than full-model fine-tuning, following \citet{xhonneux2024efficient}.
For the final runs, we use LoRA rank $r=16$, LoRA scaling $\alpha=32$, $\varepsilon=0.05$, PGD step size $0.005$, $k=10$, adversarial weight $\lambda=0.5$, learning rate $10^{-4}$, and batch size $8$.

\paragraph{LAT.}
Latent Adversarial Training (LAT) performs adversarial training by injecting perturbations into an intermediate hidden representation rather than the input token embeddings. For an input-label pair \((x,y)\), let \(\mathbf{H}_{l_{\mathrm{a}}}(x)\) denote the hidden states at transformer layer \(l_{\mathrm{a}}\). The inner maximization searches for a latent perturbation within an \(\ell_2\) ball:
\begin{equation*}
  \boldsymbol{\delta}^{*}
  =
  \operatorname*{arg\,max}_{\|\boldsymbol{\delta}\|_2 \le \varepsilon}
  \mathcal{L}\!\left(
    f_{\theta}^{(l_{\mathrm{a}})}(\mathbf{H}_{l_{\mathrm{a}}}(x)+\boldsymbol{\delta}),\, y
  \right),
\end{equation*}
where \(f_{\theta}^{(l_{\mathrm{a}})}\) denotes the remainder of the model after injecting the perturbed activation at layer \(l_{\mathrm{a}}\). We solve this inner problem with projected gradient descent (PGD) using random initialization:
\begin{equation*}
  \boldsymbol{\delta}
  \leftarrow
  \Pi_{\|\boldsymbol{\delta}\|_2 \le \varepsilon}
  \left(
    \boldsymbol{\delta}
    +
    \alpha
    \frac{\nabla_{\boldsymbol{\delta}}\mathcal{L}_{\mathrm{adv}}}
    {\|\nabla_{\boldsymbol{\delta}}\mathcal{L}_{\mathrm{adv}}\|_2}
  \right).
\end{equation*}
The outer objective is
\begin{equation*}
  \min_{\theta}\;
  \mathcal{L}_{\mathrm{clean}}
  +
  \lambda\,\mathcal{L}_{\mathrm{adv}}.
\end{equation*}

In our LAT baseline, we fine-tune all model parameters and restrict the latent perturbation to a fixed subset of positions: the final 20 non-padding tokens at the selected transformer layer, matching the suffix-focused attack setting used in our discrete attacks. For the final IMDB runs, we use \(k=8\) PGD steps, \(\alpha=\varepsilon/5\), and \(\lambda=1.0\). The learning rate is \(2\times 10^{-5}\) for Pythia-1.4B and Qwen-2.5-3B, and \(1\times 10^{-5}\) for Llama-3.1-8B. We report the best layer--radius setting from the LAT sweep: layer 4 with \(\varepsilon=1.0\) for Llama-3.1-8B, layer 8 with \(\varepsilon=1.5\) for Qwen-2.5-3B, and layer 4 with \(\varepsilon=0.5\) for Pythia-1.4B.

\subsection{Adversarial Attacks Setup}
\label{app:adv_attacks}

We evaluate robustness using three adversarial attacks under different suffix-based search strategies. All attacks are evaluated on 100 samples drawn from the designated attack split of the dataset. We report clean accuracy (ACC) and Attack Success Rate (ASR).

\paragraph{GCG (Greedy Coordinate Gradient).}
We use token-level \texttt{GCG}~\citep{zou2023universal} in suffix mode: $N{=}10$ adversarial tokens are appended to the original input.
Each round computes the gradient of the cross-entropy loss with respect to the input embeddings, selects the top-$k{=}256$ candidate replacement tokens per attack position, randomly samples $128$ of these candidates for forward-pass evaluation, and applies the single token substitution that maximises the loss.
The attack runs for $T{=}10$ rounds.

\paragraph{RandomToken.}
\texttt{RandomToken} is a gradient-free baseline designed to be comparable to \texttt{GCG} in the number of adversarial tokens while replacing gradient-guided search with uniform random sampling, following Algorithm 2 of \citet{howe2024scaling}.
At each iteration, all $N{=}10$ adversarial suffix tokens are replaced simultaneously by tokens drawn uniformly at random from the non-special vocabulary.
The attack evaluates the resulting prompt with one forward pass and stops immediately upon a successful flip; otherwise it retains the best-loss candidate seen across all iterations.
Given that an iteration of \texttt{RandomToken} is much cheaper than an iteration of \texttt{GCG}, we use $T=500$ iterations.

\paragraph{Threat model scope.}
Our main evaluation focuses on token-level suffix attacks, which are directly aligned with both the adversarial training setting studied in this paper and the perturbation model used in our theoretical analysis. We do not include semantic rewriting attacks such as PAIR \citep{chao2025jailbreaking} in the main evaluation, since they operate under a broader threat model: they use an external attacker model to iteratively rewrite the input itself. In our classification setting, this would allow the attacker to freely reformulate the original example, making the attack no longer comparable to suffix-based perturbations. 

\subsection{Computation Cost}
\label{app:resources}

Experiments were run on GPU clusters using NVIDIA H200, L40S, and RTX A6000 GPUs. Most jobs used a single GPU worker; CPU workers were used for data loading, preprocessing, and job orchestration. The exact GPU type varied across runs due to shared-cluster scheduling. For the main experiments, we report method-level compute using FLOPs in Table~\ref{tab:main_imdb}. Overall, we estimate that the full project required on the order of several hundred GPU-hours, approximately 800 GPU-hours in total.

\section{Additional Results}

\subsection{More Results}
\label{app:full_results}

\begin{table*}[h]
\centering
\caption{
Additional cross-dataset robustness results on Pythia-1.4B.
We report clean accuracy (Acc, higher is better) and attack success rate (ASR, lower is better) under RandomToken and GCG suffix attacks.
}
\label{tab:additional_pythia_results}
\setlength{\tabcolsep}{5pt}
\renewcommand{\arraystretch}{1.08}
\footnotesize
\begin{tabular}{ll ccc}
\toprule
\textbf{Dataset} & \textbf{Method}
& \textbf{Acc}$\uparrow$
& \textbf{RandomToken ASR}$\downarrow$
& \textbf{GCG ASR}$\downarrow$ \\
\midrule

\multirow{4}{*}{EnronSpam}
& No Defense    & 0.99 & 0.55 & 0.96 \\
& R2D2          & \textbf{1.00} & 0.03 & 0.11 \\
& LAT           & \textbf{1.00} & 0.11 & 0.29 \\
& Ours          & 0.99 & \textbf{0.02} & \textbf{0.08} \\
\midrule

\multirow{4}{*}{PasswordMatch}
& No Defense    & 1.00 & 1.00 & 0.93 \\
& R2D2          & \textbf{1.00} & \textbf{0.00} & \textbf{0.04} \\
& LAT           & \textbf{1.00} & 0.61 & 0.53 \\
& Ours          & \textbf{1.00} & 0.80 & 0.55 \\


\bottomrule
\end{tabular}
\end{table*}

\subsection{Additional Surrogate Analysis}

\begin{table*}[h]
\centering
\caption{Clean accuracy and GCG attack success rate at different pruning ratios of the surrogate MLP neurons on Pythia-1.4B. The surrogate is constructed by pruning neurons ranked by the proposed activation-gradient importance score. \(0\%\) corresponds to the full unpruned model. Lower GCG ASR is better; higher clean accuracy is better.}
\label{tab:surrogate_appendix}
\setlength{\tabcolsep}{6pt}
\renewcommand{\arraystretch}{1.1}
\footnotesize
\begin{tabular}{ll cc}
\\
\toprule
\textbf{Dataset} & \textbf{Pruning ratio}
& \textbf{Clean Acc}$\uparrow$
& \textbf{GCG ASR}$\downarrow$ \\
\midrule
\multirow{3}{*}{IMDB}
  & 0\% (full) & 0.96 & \textbf{0.26} \\
  & 25\%       & 0.94 & 0.33 \\
  & 50\%       & \textbf{0.97} & 0.89 \\
\midrule
\multirow{3}{*}{EnronSpam}
  & 0\% (full) & \textbf{0.99} & 0.26 \\
  & 25\%       & 0.99 & \textbf{0.08} \\
  & 50\%       & 0.99 & 0.76 \\
\bottomrule
\end{tabular}
\end{table*}
Table~\ref{tab:surrogate_appendix} shows that the optimal pruning ratio can vary across datasets. 
Moderate pruning sometimes improves transferability over the full surrogate, as in EnronSpam, suggesting that pruning may remove task-irrelevant or noisy neurons and produce a cleaner adversarial search direction. 
This is consistent with the circuit-sparsity motivation behind our surrogate construction. 
At the same time, aggressive pruning substantially weakens transfer, indicating that too much pruning destroys attack-relevant geometry.

\subsection{Prefix-attack result}
\label{app:prefix_attack}

The suffix-window defense is motivated by the classifier readout structure rather than by suffix attacks alone. In decoder-only sequence classification, the final prediction is primarily read from the last hidden state. We therefore expect defending a suffix window near this readout position to remain useful even against prefix attacks.

To verify this, we additionally evaluate prefix-GCG on Pythia-1.4B. Our method reduces the prefix-attack ASR to \(0.24\) at the same strength as \ref{app:adv_attacks}.

\section{Circuit-Guided Surrogate Details}
\label{app:surrogate_extra}
\subsection{ActDiff Method.}
ActDiff is motivated by the difference-in-means style of representation analysis widely used in mechanistic interpretability and steering-vector methods. Concretely, it ranks neurons by the absolute class-conditional gap in mean activation magnitude:
\begin{equation}
s^{\mathrm{ActDiff}}_{l,j}
=
\left|
\frac{1}{|\mathcal{D}_{+}|}
\sum_{x \in \mathcal{D}_{+}} \sum_{t} \left|a^{(l)}_{t,j}(x)\right|
-
\frac{1}{|\mathcal{D}_{-}|}
\sum_{x \in \mathcal{D}_{-}} \sum_{t} \left|a^{(l)}_{t,j}(x)\right|
\right|,
\end{equation}
where $\mathcal{D}_{+}, \mathcal{D}_{-}$ are the positive and negative calibration subsets. Unlike ActGrad, however, this rule depends only on clean activation statistics and does not measure how much a neuron actually affects the attack objective.

\subsection{Surrogate pruning heatmaps}
We further visualize the layerwise pruning patterns for two representative datasets. 
These heatmaps illustrate how the pruning MLP neurons are distributed across layers for different model families.

\begin{figure*}[h]
\centering 
\includegraphics[width=\textwidth]{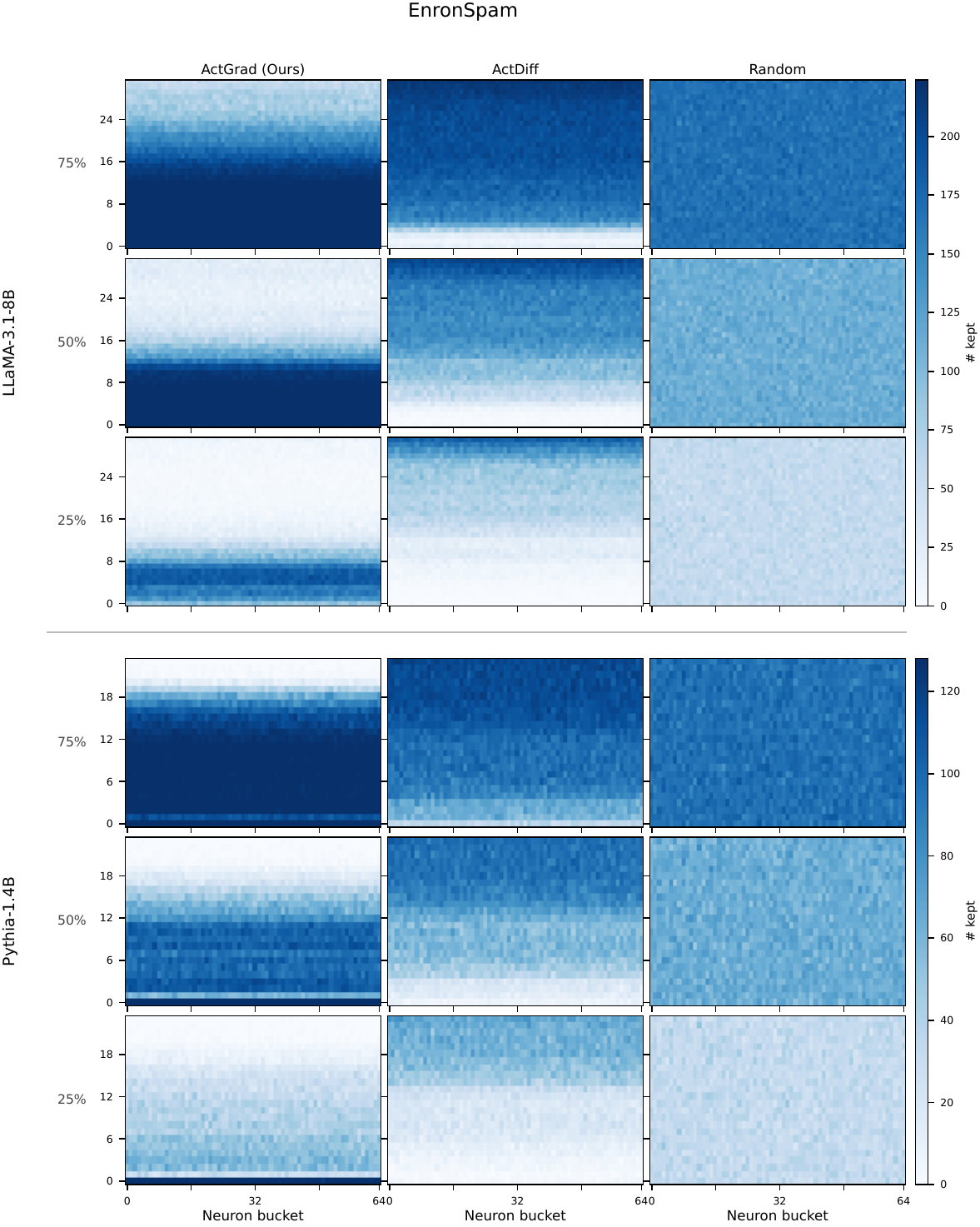} 
\caption{Heatmaps of layerwise surrogate pruning patterns on EnronSpam. 75\% means pruning 25\% of MLP.} 
\label{fig:surrogate_heatmaps} 
\end{figure*} 

\begin{figure*}[h] 
\centering 
\includegraphics[width=\textwidth]{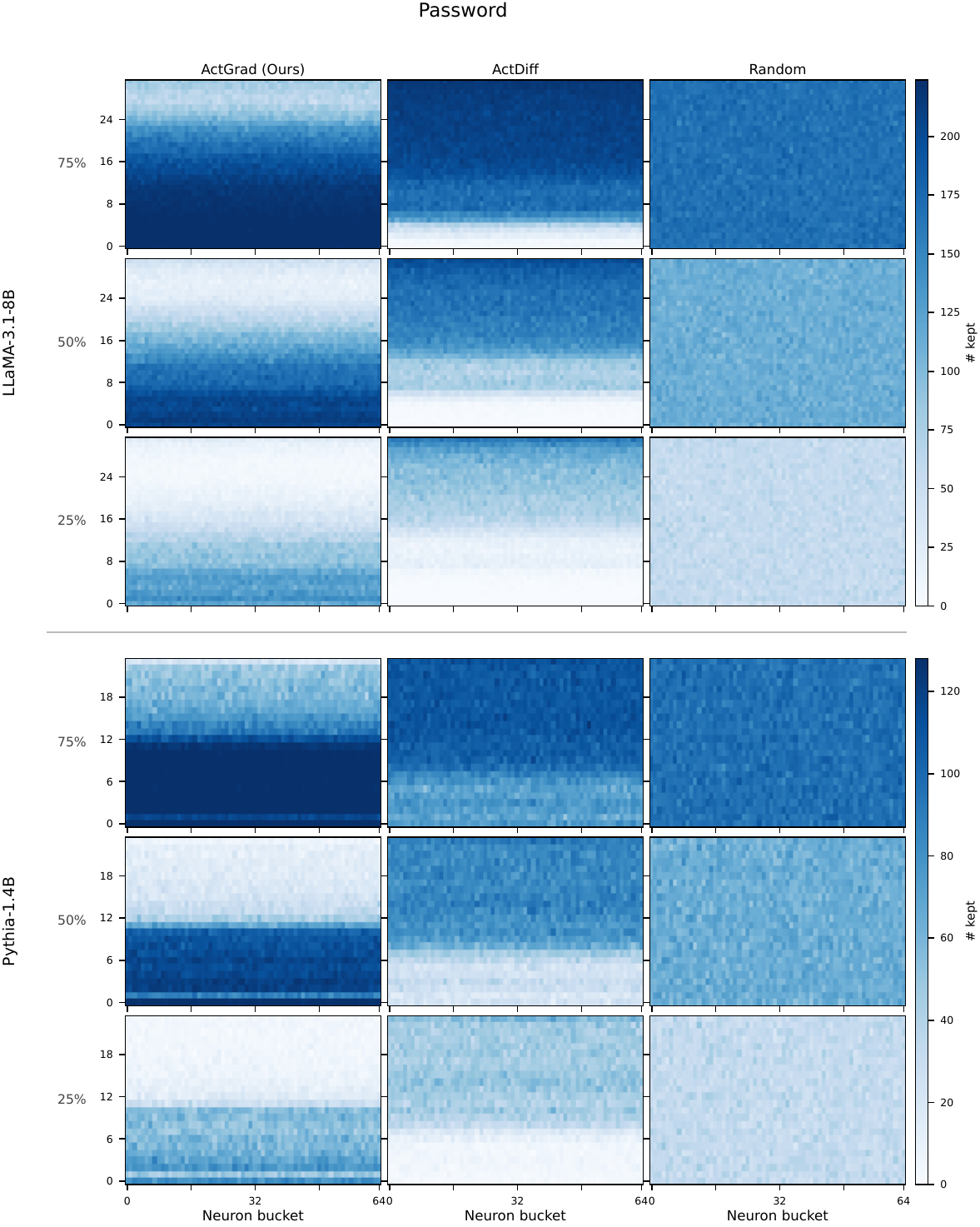} 
\caption{Heatmaps of layerwise surrogate pruning patterns on PasswordMatch.} 
\label{fig:surrogate_heatmaps_password} 
\end{figure*}

\clearpage
\section{Additional Theory and Proofs}
\subsection{Complete Mathematical Setup}
\label{app:math_setup}

We provide the full attention-level setup used in the theoretical analysis. As in the main text, we analyze a single attention head at a fixed layer $l=l_{\mathrm{r}}+1$, since the multi-head case only changes constants. For the final token at position $T$, define
\[
\mathbf q_T^l = \mathbf W_Q^l \mathbf h_T^{\,l_{\mathrm{r}}},\quad
\mathbf k_t^l = \mathbf W_K^l \mathbf h_t^{\,l_{\mathrm{r}}},\quad
\mathbf v_t^l = \mathbf W_V^l \mathbf h_t^{\,l_{\mathrm{r}}},
\]
and let the corresponding attention output be
\[
\mathbf o_T^l
=
\sum_{t=1}^T \alpha_{T,t}^l \mathbf v_t^l,
\quad
\alpha_{T,t}^l
=
\frac{
\exp\!\left((\mathbf q_T^l)^\top \mathbf k_t^l/\sqrt d\right)
}{
\sum_{j=1}^T
\exp\!\left((\mathbf q_T^l)^\top \mathbf k_j^l/\sqrt d\right)
}.
\]

For completeness, we also restate the ReFT operator used throughout the proofs. Let $\mathbf R\in\mathbb R^{r\times d}$ be row-orthonormal, i.e.,
\[
\mathbf R\mathbf R^\top=\mathbf I_r,
\]
and let $\mathbf W\in\mathbb R^{r\times d}$ and $\mathbf b\in\mathbb R^r$. For a hidden state $\mathbf h\in\mathbb R^d$, the ReFT intervention is
\[
\Phi_{\mathbf R}(\mathbf h)
=
\mathbf h+\mathbf R^\top(\mathbf W\mathbf h+\mathbf b-\mathbf R\mathbf h)
=
(\mathbf I-\mathbf R^\top\mathbf R)\mathbf h
+
\mathbf R^\top(\mathbf W\mathbf h+\mathbf b).
\]
Since $\mathbf R$ has orthonormal rows, $\mathbf R^\top\mathbf R$ is the orthogonal projector onto the edited subspace $\mathrm{span}(\mathbf R)$, and
\[
\Pi_{\mathrm{span}(\mathbf R)^\perp}
:=
\mathbf I-\mathbf R^\top\mathbf R
\]
is the orthogonal projector onto its orthogonal complement.

\subsection{Proof of Theorem~\ref{thm:causal_bypass}}
\label{app:Pf_of_Th1}
\begin{proof}
We analyze the attention layer $l=l_{\mathrm{r}}+1$ immediately after the defense layer. Let $(\mathbf q_T^l,\mathbf k_t^l,\mathbf v_t^l,\mathbf o_T^l)$ and $(\hat{\mathbf q}_T^l,\hat{\mathbf k}_t^l,\hat{\mathbf v}_t^l,\hat{\mathbf o}_T^l)$ denote the clean and defended query, key, value, and attention output, respectively.

For attacked suffix positions $t\in\mathcal T_{\mathrm{atk}}\setminus\{T\}$, no correction is applied, so
\[
\hat{\mathbf h}_t^{\,l_{\mathrm{r}}}
=
\mathbf h_t^{\,l_{\mathrm{r}}}+\Delta_t,
\]
which implies
\[
\hat{\mathbf k}_t^l=\mathbf k_t^l+\mathbf W_K^l\Delta_t,
\quad
\hat{\mathbf v}_t^l=\mathbf v_t^l+\mathbf W_V^l\Delta_t.
\]
At the defended token $T$, the ReFT correction gives
\[
\bigl\|
\hat{\mathbf h}_T^{\,l_{\mathrm{r}}}
-
\mathbf h_T^{\,l_{\mathrm{r}}}
\bigr\|
\le \varepsilon_T,
\]
and therefore
\[
\|\hat{\mathbf q}_T^l-\mathbf q_T^l\|\le \|\mathbf W_Q^l\|\varepsilon_T,
\quad
\|\hat{\mathbf k}_T^l-\mathbf k_T^l\|\le \|\mathbf W_K^l\|\varepsilon_T,
\quad
\|\hat{\mathbf v}_T^l-\mathbf v_T^l\|\le \|\mathbf W_V^l\|\varepsilon_T.
\]

We decompose the output difference as
\begin{align*}
\hat{\mathbf o}_T^l-\mathbf o_T^l
&=
\sum_{t=1}^T \alpha_{T,t}^l(\hat{\mathbf v}_t^l-\mathbf v_t^l)
+
\sum_{t=1}^T(\hat{\alpha}_{T,t}^l-\alpha_{T,t}^l)\hat{\mathbf v}_t^l \notag\\
&=
\sum_{t=T-k+1}^{T-1}\alpha_{T,t}^l\mathbf W_V^l\Delta_t
+
\alpha_{T,T}^l(\hat{\mathbf v}_T^l-\mathbf v_T^l)
+
\sum_{t=1}^T(\hat{\alpha}_{T,t}^l-\alpha_{T,t}^l)\hat{\mathbf v}_t^l.
\end{align*}
Applying the reverse triangle inequality gives
\begin{align*}
\bigl\|
\hat{\mathbf o}_T^{l}-\mathbf o_T^{l}
\bigr\|
&\ge
\left\|
\sum_{t=T-k+1}^{T-1}\alpha_{T,t}^{l}\mathbf W_V^{l}\Delta_t
\right\|
-
\alpha_{T,T}^{l}\|\hat{\mathbf v}_T^{l}-\mathbf v_T^{l}\|
-
\left\|
\sum_{t=1}^{T}
(\hat{\alpha}_{T,t}^{l}-\alpha_{T,t}^{l})\hat{\mathbf v}_t^{l}
\right\|
\notag\\
&\ge
\left\|
\sum_{t=T-k+1}^{T-1}\alpha_{T,t}^{l}\mathbf W_V^{l}\Delta_t
\right\|
-
\alpha_{T,T}^{l}\|\mathbf W_V^{l}\|\,\varepsilon_T
-
\sum_{t=1}^{T}
\bigl|
\hat{\alpha}_{T,t}^{l}-\alpha_{T,t}^{l}
\bigr|
\cdot
\|\hat{\mathbf v}_t^{l}\|.
\end{align*}
This proves \eqref{eq:causal_leakage_bounded_main} with
\[
\mathcal E_T=\alpha_{T,T}^{l}\|\mathbf W_V^{l}\|\,\varepsilon_T, \quad \text{and} \quad
\mathcal E_{\mathrm{attn}}
=
\sum_{t=1}^{T}
\bigl|
\hat{\alpha}_{T,t}^{l}-\alpha_{T,t}^{l}
\bigr|
\cdot
\|\hat{\mathbf v}_t^{l}\|.
\]

It remains to bound $\mathcal E_{\mathrm{attn}}$. 
The attention weights depend on the query $\mathbf q_T^l$ and the key collection
$\{\mathbf k_t^l\}_{t=1}^T$ through the softmax map. Under Assumption~\ref{ass:lipschitz_attention}, there exists a local Lipschitz constant $L_\alpha>0$ such that
\[
\sum_{t=1}^{T}
\bigl|
\hat{\alpha}_{T,t}^{l}-\alpha_{T,t}^{l}
\bigr|
\le
L_\alpha
\left(
\|\hat{\mathbf q}_T^{l}-\mathbf q_T^{l}\|
+
\sum_{t=1}^{T}\|\hat{\mathbf k}_t^{l}-\mathbf k_t^{l}\|
\right).
\]
Using the bounds established above,
\[
\|\hat{\mathbf q}_T^{l}-\mathbf q_T^{l}\|
\le
\|\mathbf W_Q^{l}\|\,\varepsilon_T,
\]
and
\[
\sum_{t=1}^{T}\|\hat{\mathbf k}_t^{l}-\mathbf k_t^{l}\|
\le
\|\mathbf W_K^{l}\|\,\varepsilon_T
+
\|\mathbf W_K^{l}\|
\sum_{t=T-k+1}^{T-1}\|\Delta_t\|.
\]
Therefore,
\[
\sum_{t=1}^{T}
\bigl|
\hat{\alpha}_{T,t}^{l}-\alpha_{T,t}^{l}
\bigr|
\le
L_\alpha
\left(
\|\mathbf W_Q^{l}\|\,\varepsilon_T
+
\|\mathbf W_K^{l}\|\,\varepsilon_T
+
\|\mathbf W_K^{l}\|
\sum_{t=T-k+1}^{T-1}\|\Delta_t\|
\right).
\]

If the defended trajectory remains in a bounded neighborhood of the clean one, then there exists a constant $M_V>0$ such that
\[
\sup_{1\le t\le T}\|\hat{\mathbf v}_t^{l}\|\le M_V.
\]
Hence
\begin{align*}
\mathcal E_{\mathrm{attn}}
&\le
M_V
\sum_{t=1}^{T}
\bigl|
\hat{\alpha}_{T,t}^{l}-\alpha_{T,t}^{l}
\bigr|
\notag\\
&\le
M_V L_\alpha
\left(
(\|\mathbf W_Q^{l}\|+\|\mathbf W_K^{l}\|)\varepsilon_T
+
\|\mathbf W_K^{l}\|
\sum_{t=T-k+1}^{T-1}\|\Delta_t\|
\right).
\end{align*}
Thus there exists a constant $C_{\mathrm{attn}}>0$ such that
\[
\mathcal E_{\mathrm{attn}}
\le
C_{\mathrm{attn}}
\left(
\varepsilon_T
+
\sum_{t=T-k+1}^{T-1}\|\Delta_t\|
\right),
\]
which proves the theorem.

\end{proof}

\subsection{Proof of Theorem~\ref{thm:seq_shared_defense}}
\label{app:Pf_of_Th2}
\begin{proof}
Because $k\le L$, the attack window is contained in the defended suffix window, i.e.,
\[
\mathcal T_{\mathrm{atk}}\subseteq \mathcal T_{\mathrm{LAT}}.
\]
Hence, for every attacked position $t\in\mathcal T_{\mathrm{atk}}$, the sequence-shared operator satisfies
\[
\|\Phi_{\mathrm{shared}}(\tilde{\mathbf h}_t^{\,l_{\mathrm{r}}})-\mathbf h_t^{\,l_{\mathrm{r}}}\|
\le
\varepsilon_{\mathrm{LAT}}.
\]
Therefore, the corresponding query, key, and value perturbations are bounded by
\[
\|\hat{\mathbf q}_T^l-\mathbf q_T^l\|\le B_Q\varepsilon_{\mathrm{LAT}},
\quad
\|\hat{\mathbf k}_t^l-\mathbf k_t^l\|\le B_K\varepsilon_{\mathrm{LAT}},
\quad
\|\hat{\mathbf v}_t^l-\mathbf v_t^l\|\le B_V\varepsilon_{\mathrm{LAT}}.
\]

We decompose the attention distortion into a value term and an attention-weight term:
\[
\|\hat{\mathbf o}_T^l-\mathbf o_T^l\|
\le
\sum_{t=1}^T \alpha_{T,t}^l \|\hat{\mathbf v}_t^l-\mathbf v_t^l\|
+
\sum_{t=1}^T |\hat{\alpha}_{T,t}^l-\alpha_{T,t}^l|\,\|\hat{\mathbf v}_t^l\|.
\]

For the value term, only attacked positions contribute, and \(\sum_t \alpha_{T,t}^l=1\), so
\[
\sum_{t=1}^T \alpha_{T,t}^l \|\hat{\mathbf v}_t^l-\mathbf v_t^l\|
\le
B_V\varepsilon_{\mathrm{LAT}}.
\]

For the attention-weight term, Assumption~\ref{ass:lipschitz_attention} gives
\[
\sum_{t=1}^T |\hat{\alpha}_{T,t}^l-\alpha_{T,t}^l|
\le
L_\alpha
\left(
\|\hat{\mathbf q}_T^l-\mathbf q_T^l\|
+
\sum_{t=1}^T \|\hat{\mathbf k}_t^l-\mathbf k_t^l\|
\right).
\]
Only the attacked positions contribute to the key perturbation. For $t \notin \mathcal T_{\mathrm{atk}}$, we have $\hat{\mathbf k}_t^l = \mathbf k_t^l$. For $t \in \mathcal T_{\mathrm{atk}}$, the sequence-shared defense ensures
\[
\|\hat{\mathbf k}_t^l-\mathbf k_t^l\|
\le B_K\varepsilon_{\mathrm{LAT}}.
\]
Since $|\mathcal T_{\mathrm{atk}}| = k$, it follows that
\[
\sum_{t=1}^T \|\hat{\mathbf k}_t^l-\mathbf k_t^l\|
\le
k B_K \varepsilon_{\mathrm{LAT}}.
\]
Therefore,
\[
\sum_{t=1}^T |\hat{\alpha}_{T,t}^l-\alpha_{T,t}^l|
\le
L_\alpha\varepsilon_{\mathrm{LAT}}(B_Q+kB_K).
\]
Assuming \(\|\hat{\mathbf v}_t^l\|\le M_V\), we obtain
\[
\sum_{t=1}^T |\hat{\alpha}_{T,t}^l-\alpha_{T,t}^l|\,\|\hat{\mathbf v}_t^l\|
\le
M_VL_\alpha\varepsilon_{\mathrm{LAT}}(B_Q+kB_K).
\]

Combining the two bounds yields
\[
\|\hat{\mathbf o}_T^l-\mathbf o_T^l\|
\le
\varepsilon_{\mathrm{LAT}}
\bigl(
B_V + M_VL_\alpha(B_Q+kB_K)
\bigr),
\]
which proves the theorem.
\end{proof}

\subsection{Proof of Theorem~\ref{thm:surrogate}}
\label{app:surrogate}

\begin{lemma}
\label{lem:jacobian_z}
To simplify notation, write the MLP expansion layer at \(l_{\mathrm a}\) as
\(
Z=\phi(H^{\mathrm{attn}}W_1),
\)
where
\[
H^{\mathrm{attn}}
=
\sigma\!\left(
\operatorname{RowSoftmax}\!\left(
\frac{\widetilde{\mathbf H}W_{QK}\widetilde{\mathbf H}^\top}{\sqrt{d_m}}
\right)\widetilde{\mathbf H}W_V
\right),
\quad
W_{QK}=W_QW_K^\top,
\]
and \(\sigma\) and \(\phi\) are \(L_\sigma\)- and \(L_\phi\)-Lipschitz, respectively. Assume that 
\[
\|W_Q\|\le B_Q,\quad \|W_K\|\le B_K,\quad \|W_V\|\le B_V.
\]
Then there exists \(C_{\mathrm{attn}}=O(T)\) such that
\[
\left\|\frac{\partial Z}{\partial \widetilde{\mathbf H}}\right\|
\le
L_\phi L_\sigma \|W_1\|
\left(
C_{\mathrm{attn}}\frac{B_QB_KB_V}{\sqrt{d_m}}
+
B_V
\right).
\]
\end{lemma}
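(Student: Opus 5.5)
The plan is a chain-rule argument. We treat $\widetilde{\mathbf H}\mapsto Z$ as the composition of four maps. First, $\widetilde{\mathbf H}\mapsto A:=\operatorname{RowSoftmax}(\widetilde{\mathbf H}W_{QK}\widetilde{\mathbf H}^\top/\sqrt{d_m})$. Second, $(A,\widetilde{\mathbf H})\mapsto P:=A\widetilde{\mathbf H}W_V$. Third comes the elementwise $\sigma$, and fourth $U\mapsto \phi(UW_1)$. The operator norm of the Jacobian is then bounded by the product of the Lipschitz constants of the outer maps with the norm of the inner Jacobian. Since $\phi$ and $\sigma$ act elementwise and are $L_\phi$- and $L_\sigma$-Lipschitz, their Jacobians are diagonal with entries bounded by $L_\phi$ and $L_\sigma$. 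Right multiplication by $W_1$ contributes $\|W_1\|$. This gives
\[
\Bigl\|\frac{\partial Z}{\partial \widetilde{\mathbf H}}\Bigr\|\le L_\phi L_\sigma\|W_1\|\,\Bigl\|\frac{\partial P}{\partial \widetilde{\mathbf H}}\Bigr\|.
\]
It therefore remains to show that $\|\partial P/\partial\widetilde{\mathbf H}\|\le C_{\mathrm{attn}}B_QB_KB_V/\sqrt{d_m}+B_V$.

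For $P$, we use the product rule:
\[
dP = (dA)\,\widetilde{\mathbf H}W_V + A\,(d\widetilde{\mathbf H})W_V .
\]
The second term is the value path. Each row of $A$ is a probability vector, so $A$ acts as a row-stochastic averaging. We bound its operator norm by $1$, which holds in the row-wise $\ell_\infty$-type sense. In the Frobenius/spectral setting one can instead absorb $\|A\|$ into $C_{\mathrm{attn}}$ if needed. This yields the standalone $B_V$ term.

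The first term is the attention-weight path. The softmax Jacobian at each row is $\operatorname{diag}(p)-pp^\top$, whose norm is at most $1/2$. The logits differential is
\[
\bigl(d\widetilde{\mathbf H}\,W_{QK}\widetilde{\mathbf H}^\top+\widetilde{\mathbf H}W_{QK}\,d\widetilde{\mathbf H}^\top\bigr)/\sqrt{d_m},
\]
and $\|W_{QK}\|\le B_QB_K$. Combining these, $\|dA\,\widetilde{\mathbf H}W_V\|$ is at most a constant times $\|\widetilde{\mathbf H}\|^2 B_QB_KB_V\|d\widetilde{\mathbf H}\|/\sqrt{d_m}$.

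The main obstacle is controlling the factor $\|\widetilde{\mathbf H}\|^2$ and extracting the $O(T)$ dependence. We would use the local regularity assumption, which parallels Assumption~\ref{ass:lipschitz_attention}: hidden states in the attack ball $\mathcal B(\mathbf H_{l_{\mathrm a}},\varepsilon)$ have rows bounded by some constant $R_h$, so that $\|\widetilde{\mathbf H}\|\le\sqrt{T}R_h$. Summing the per-row softmax Jacobian contributions over the $T$ rows then gives a factor scaling linearly in $T$. Defining $C_{\mathrm{attn}}:=c\,T R_h^2$ absorbs these quantities and yields $C_{\mathrm{attn}}=O(T)$. Plugging both path bounds into the chain-rule bound above completes the argument.
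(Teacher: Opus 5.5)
Your proposal is correct and follows essentially the same route as the paper: use the chain rule to pull out $L_\phi L_\sigma\|W_1\|$, then split $\partial(A\widetilde{\mathbf H}W_V)/\partial\widetilde{\mathbf H}$ into a value path bounded by $B_V$ and an attention-weight path controlled by the $\tfrac12$ softmax-Jacobian bound and $\|W_{QK}\|\le B_QB_K$, with the leftover sequence-length factor absorbed into $C_{\mathrm{attn}}=O(T)$. Your bookkeeping is more explicit than the paper's in two places: you make the $\|\widetilde{\mathbf H}\|^2$ dependence explicit through $R_h$, which the paper absorbs silently, and you note that the $B_V$ value-path bound is clean only in a max-row norm, which the paper asserts without comment (in spectral or Frobenius terms $\|A\|_2$ can reach $\sqrt{T}$, and absorbing that into $C_{\mathrm{attn}}$ forces a factor $\sqrt{d_m}/(B_QB_K)$ into it, so working in the max-row norm throughout is the cleaner fix).
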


\begin{proof}[Proof of Lemma~\ref{lem:jacobian_z}]
By the chain rule,
\[
\frac{\partial Z}{\partial \widetilde{\mathbf H}}
=
\frac{\partial Z}{\partial H^{\mathrm{attn}}}
\frac{\partial H^{\mathrm{attn}}}{\partial \widetilde{\mathbf H}}.
\]
Since \(\phi\) is \(L_\phi\)-Lipschitz,
\[
\left\|\frac{\partial Z}{\partial H^{\mathrm{attn}}}\right\|
\le
L_\phi \|W_1\|.
\]

Let
\(
A(\widetilde{\mathbf H})
=
\operatorname{RowSoftmax}\!\left(
\frac{\widetilde{\mathbf H}W_{QK}\widetilde{\mathbf H}^\top}{\sqrt{d_m}}
\right),
\)
so that
\(
H^{\mathrm{attn}}
=
\sigma\!\left(
A(\widetilde{\mathbf H})\,\widetilde{\mathbf H}W_V
\right).
\)
Since \(\sigma\) is \(L_\sigma\)-Lipschitz,
\[
\left\|\frac{\partial H^{\mathrm{attn}}}{\partial \widetilde{\mathbf H}}\right\|
\le
L_\sigma
\left\|
\frac{\partial \bigl(A(\widetilde{\mathbf H})\widetilde{\mathbf H}W_V\bigr)}
{\partial \widetilde{\mathbf H}}
\right\|.
\]

For a single-row softmax \(p=\mathrm{softmax}(z)\), its Jacobian is
\[
D\,\mathrm{softmax}(z)=\operatorname{diag}(p)-pp^\top,
\]
and satisfies the uniform spectral-norm bound \(\|D\,\mathrm{softmax}(z)\|\le \tfrac12\) \citep{nair2025softmax}. Therefore the RowSoftmax map is uniformly Lipschitz row-wise. Then the derivative of \(A(\widetilde{\mathbf H})\,\widetilde{\mathbf H}W_V\) with respect to \(\widetilde{\mathbf H}\) consists of a value-path term and an attention-weight term. The value-path term is bounded by \(B_V\). For the attention-weight term, the softmax Jacobian bound together with
\[
\|W_{QK}\|
\le
\|W_Q\|\,\|W_K\|
\le
B_QB_K,
\quad
\|W_V\|\le B_V,
\]
gives a contribution proportional to
\(
\frac{\|W_{QK}\|\|W_V\|}{\sqrt{d_m}},
\)
multiplied by a sequence-length accumulation factor. Absorbing this factor into \(C_{\mathrm{attn}}=O(T)\), we obtain
\[
\left\|
\frac{\partial H^{\mathrm{attn}}}{\partial \widetilde{\mathbf H}}
\right\|
\le
L_\sigma
\left(
C_{\mathrm{attn}}\frac{B_QB_KB_V}{\sqrt{d_m}}
+
B_V
\right).
\]

Combining the above displays gives
\[
\left\|\frac{\partial Z}{\partial \widetilde{\mathbf H}}\right\|
\le
L_\phi L_\sigma \|W_1\|
\left(
C_{\mathrm{attn}}\frac{B_QB_KB_V}{\sqrt{d_m}}
+
B_V
\right).
\]
\end{proof}

\begin{proof}[Proof of Theorem~\ref{thm:surrogate}]
We work on the following high-probability event.

\begin{assumption}[Non-degeneracy on pruned active coordinates]
\label{ass:nondeg_pruned}
There exist \(m_0>0\) and \(p\in(0,1)\) such that, with probability at least \(1-p\),
\[
\min_{j\notin S}\left|a_{t,j}^{(l_{\mathrm a})}(x)\right|
\ge
m_0,
\]
for all \(t\in[K]\).
\end{assumption}

\begin{assumption}[Local sign-stability]
\label{ass:sign_stability}
There exists a local constant \(c>0\) such that, on the same event,
\[
\langle g_t,u_t\rangle-\langle g_t,u_t^s\rangle
\le
c\,\|g_t-g_t^s\|,
\]
for all \(t\in[K]\), where
\[
g_t=\nabla_{\widetilde{\mathbf H}}L_f\!\left(\widetilde{\mathbf H}^{(t)}\right),
\quad
g_t^s=\nabla_{\widetilde{\mathbf H}}L_s\!\left(\widetilde{\mathbf H}^{(t)}\right),
\]
and
\(
u_t:=\operatorname{sign}(g_t),
u_t^s:=\operatorname{sign}(g_t^s)
\)
be the one-step \(\ell_\infty\)-PGD ascent directions.
\end{assumption}

Define
\[
G_t
:=
L_f\!\left(\widetilde{\mathbf H}^{(t)}+\eta u_t\right)
-
L_f\!\left(\widetilde{\mathbf H}^{(t)}+\eta u_t^s\right).
\]

Let \(Z\) denote the MLP expansion layer at \(l_{\mathrm a}\), and let
\(
r_t:=\nabla_Z L_f\!\left(\widetilde{\mathbf H}^{(t)}\right).
\)
Write \(P_S\) for the projection onto the retained neuron set \(S\). Define
\[
s_{t,j}
:=
\left|a_{t,j}^{(l_{\mathrm a})}(x)\,r_{t,j}^{(l_{\mathrm a})}\right|,
\quad
M_t
:=
\left(
\sum_{j\notin S}
|a_{t,j}^{(l_{\mathrm a})}(x)\,r_{t,j}^{(l_{\mathrm a})}|^2
\right)^{1/2}.
\]

By the first-order expansion of \(L_f\),
\[
L_f(\widetilde{\mathbf H}^{(t)}+\eta u)
=
L_f(\widetilde{\mathbf H}^{(t)})
+
\eta\langle g_t,u\rangle
+
o(\eta),
\]
hence
\[
G_t
=
\eta\bigl(\langle g_t,u_t\rangle-\langle g_t,u_t^s\rangle\bigr)
+
o(\eta)
\le
c\,\eta\,\|g_t-g_t^s\|+o(\eta).
\]

Since the surrogate replaces the full MLP expansion \(Z\) by its projected version
\(
Z_s=P_S Z,
\)
its Jacobian with respect to the attacked hidden state is correspondingly projected:
\(
\frac{\partial Z_s}{\partial \widetilde{\mathbf H}}
=
P_S\frac{\partial Z}{\partial \widetilde{\mathbf H}}.
\)
Let
\[
r_t=\nabla_Z L_f\!\left(\widetilde{\mathbf H}^{(t)}\right),
\qquad
r_t^s=\nabla_{Z_s} L_s\!\left(\widetilde{\mathbf H}^{(t)}\right).
\]
Under the surrogate approximation, we identify the surrogate MLP-path gradient with the projected full gradient, i.e.
\(
r_t^s \approx P_S r_t.
\)
Hence
\[
g_t^s
=
\left(\frac{\partial Z_s}{\partial \widetilde{\mathbf H}}\right)^\top r_t^s
\approx
\left(\frac{\partial Z}{\partial \widetilde{\mathbf H}}\right)^\top P_S r_t,
\]
and therefore
\[
g_t-g_t^s
\approx
\left(\frac{\partial Z}{\partial \widetilde{\mathbf H}}\right)^\top (I-P_S)r_t.
\]
Accordingly, we use the bound
\[
\|g_t-g_t^s\|
\le
\left\|\frac{\partial Z}{\partial \widetilde{\mathbf H}}\right\|
\cdot
\|(I-P_S)r_t\|
+
\varepsilon_t,
\]
where \(\varepsilon_t\) collects the surrogate mismatch induced by replacing \(r_t^s\) with \(P_S r_t\). In the ideal projected-gradient case, \(\varepsilon_t=0\).

For \(j\notin S\),
\[
|r_{t,j}^{(l_{\mathrm a})}|
=
\frac{s_{t,j}}{|a_{t,j}^{(l_{\mathrm a})}(x)|}
\le
\frac{s_{t,j}}{m_0},
\]
and thus
\[
\|(I-P_S)r_t\|^2
=
\sum_{j\notin S}\bigl(r_{t,j}^{(l_{\mathrm a})}\bigr)^2
\le
\frac{1}{m_0^2}\sum_{j\notin S}s_{t,j}^2
=
\frac{1}{m_0^2}M_t^2.
\]
Hence
\[
\|(I-P_S)r_t\|
\le
\frac{1}{m_0}M_t.
\]

Applying Lemma~\ref{lem:jacobian_z},
\[
\left\|\frac{\partial Z}{\partial \widetilde{\mathbf H}}\right\|
\le
L_\phi L_\sigma \|W_1\|
\left(
C_{\mathrm{attn}}\frac{B_QB_KB_V}{\sqrt{d_m}}
+
B_V
\right).
\]
Therefore
\[
G_t
\le
\frac{c\,L_\phi L_\sigma \|W_1\|}{m_0}
\left(
C_{\mathrm{attn}}\frac{B_QB_KB_V}{\sqrt{d_m}}
+
B_V
\right)\eta\,M_t
+
o(\eta).
\]

Summing over \(t\in[K]\), we obtain
\[
\sum_{t=1}^{K}G_t
\le
\frac{c\,L_\phi L_\sigma \|W_1\|}{m_0}
\left(
C_{\mathrm{attn}}\frac{B_QB_KB_V}{\sqrt{d_m}}
+
B_V
\right)\eta
\sum_{t=1}^{K}M_t
+
o(K\eta).
\]

By a union bound, the estimate holds with probability at least \(1-2Kp\). Denote 
\[
C_\mathrm{curr}=\frac{c\,L_\phi L_\sigma \|W_1\|}{m_0}
\left(
C_{\mathrm{attn}}\frac{B_QB_KB_V}{\sqrt{d_m}}
+
B_V
\right)
\] gives the final result.
\end{proof}

\subsection{A Spatial Transfer Limitation of Last-Token ReFT}
\label{app:generalize}

We further explain why a ReFT module trained only at the final token need not transfer optimally to earlier positions. The key point is that, at an earlier position, the reconstruction error naturally decomposes into two parts: one outside the learned edited subspace, and one inside that subspace.

\begin{proposition}[Orthogonal decomposition of transfer error under single-token LAT]
\label{prop:orthogonal_misalignment}
Let $(\mathbf R^*,\mathbf W^*,\mathbf b^*)$ be obtained by training ReFT only at the final token position $T$:
\[
(\mathbf R^*,\mathbf W^*,\mathbf b^*)
\in
\arg\min_{\mathbf R,\mathbf W,\mathbf b}
\;
\mathbb E_{\mathbf h_T}
\left[
\max_{\|\delta\|\le \varepsilon}
\|
\Phi_{\mathbf R}(\mathbf h_T+\delta)-\mathbf h_T
\|^2
\right].
\]
For an earlier position $t<T$, let $\tilde{\mathbf h}_t=\mathbf h_t+\Delta_t$. Then
\[
\mathbb E \Big[ \| \Phi_{\mathbf R^*}(\tilde{\mathbf h}_t)-\mathbf h_t \|^2 \Big]
=
\mathbb E \Big[ \| (\mathbf I-\mathbf R^{*\top}\mathbf R^*)\Delta_t \|^2 \Big]
+
\mathbb E \Big[ \| \mathbf R^{*\top}(\mathbf W^*\tilde{\mathbf h}_t+\mathbf b^* - \mathbf R^*\mathbf h_t) \|^2 \Big].
\]
\end{proposition}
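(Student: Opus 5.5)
The identity is a pointwise Pythagorean decomposition that is then averaged. The optimality of $(\mathbf R^*,\mathbf W^*,\mathbf b^*)$ for the last-token objective plays no role in the equality. It only matters for interpreting the two terms. So I will prove the identity for an arbitrary row-orthonormal $\mathbf R$ and arbitrary $\mathbf W,\mathbf b$, and then specialize to the trained parameters.

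The first step is to rewrite the ReFT operator using the projector form from Appendix~\ref{app:math_setup}:
\[
\Phi_{\mathbf R}(\mathbf h)=(\mathbf I-\mathbf R^\top\mathbf R)\mathbf h+\mathbf R^\top(\mathbf W\mathbf h+\mathbf b).
\]
Apply it to $\tilde{\mathbf h}_t=\mathbf h_t+\Delta_t$ and subtract $\mathbf h_t=(\mathbf I-\mathbf R^\top\mathbf R)\mathbf h_t+\mathbf R^\top\mathbf R\mathbf h_t$. This gives
\[
\Phi_{\mathbf R}(\tilde{\mathbf h}_t)-\mathbf h_t=(\mathbf I-\mathbf R^\top\mathbf R)\Delta_t+\mathbf R^\top(\mathbf W\tilde{\mathbf h}_t+\mathbf b-\mathbf R\mathbf h_t).
\]
The key cancellation is that the complement projector annihilates the $\mathbf h_t$ part except through $\Delta_t$.

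The second step is to show the two summands are orthogonal. Since $\mathbf R\mathbf R^\top=\mathbf I_r$, we have $\mathbf R(\mathbf I-\mathbf R^\top\mathbf R)=0$. Hence for any $\mathbf u\in\mathbb R^d$ and $\mathbf w\in\mathbb R^r$,
\[
\langle(\mathbf I-\mathbf R^\top\mathbf R)\mathbf u,\mathbf R^\top\mathbf w\rangle=\langle\mathbf R(\mathbf I-\mathbf R^\top\mathbf R)\mathbf u,\mathbf w\rangle=0.
\]
Equivalently, the first term lies in $\mathrm{span}(\mathbf R)^\perp$ and the second lies in the row space of $\mathbf R$. The Pythagorean theorem then gives the squared-norm identity pointwise, for every realization of $(\mathbf h_t,\Delta_t)$.

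The last step is to take expectations of both sides, using linearity of expectation. The only bookkeeping is to assume the expectations are finite, e.g.\ bounded hidden states and perturbations, as in the local setting of Assumption~\ref{ass:lipschitz_attention}. Finally, set $(\mathbf R,\mathbf W,\mathbf b)=(\mathbf R^*,\mathbf W^*,\mathbf b^*)$.

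There is no real obstacle; the step most likely to be mishandled is the algebra in the first step. One must write $\mathbf h_t$ via the projector split rather than expand $\Phi$ naively, so that the in-subspace residual appears exactly as $\mathbf R^{*\top}(\mathbf W^*\tilde{\mathbf h}_t+\mathbf b^*-\mathbf R^*\mathbf h_t)$. After the identity is established, I would add a remark on its meaning. The first term cannot be reduced by any choice of $\mathbf W,\mathbf b$: it is the component of $\Delta_t$ outside the subspace learned from position-$T$ statistics. Transfer to earlier positions therefore fails whenever $\Delta_t$ has mass outside $\mathrm{span}(\mathbf R^*)$.
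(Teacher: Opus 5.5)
Your proposal is correct and follows essentially the same route as the paper: write $\Phi_{\mathbf R}$ in projector form, subtract the split $\mathbf h_t=(\mathbf I-\mathbf R^\top\mathbf R)\mathbf h_t+\mathbf R^\top\mathbf R\mathbf h_t$, note that the two residual pieces lie in $\mathrm{span}(\mathbf R)^\perp$ and $\mathrm{span}(\mathbf R)$, apply Pythagoras pointwise, and take expectations. Your explicit check $\mathbf R(\mathbf I-\mathbf R^\top\mathbf R)=0$ and your remark that the optimality of $(\mathbf R^*,\mathbf W^*,\mathbf b^*)$ is never used are both accurate, and the finiteness caveat is not needed, since both summands are nonnegative and linearity of expectation holds in $[0,\infty]$.
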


\begin{proof}
By definition of the ReFT operator,
\[
\Phi_{\mathbf R^*}(\tilde{\mathbf h}_t)
=
(\mathbf I-\mathbf R^{*\top}\mathbf R^*)(\mathbf h_t+\Delta_t)
+
\mathbf R^{*\top}(\mathbf W^*\tilde{\mathbf h}_t+\mathbf b^*).
\]
Subtracting $\mathbf h_t$ and using
\[
\mathbf h_t
=
(\mathbf I-\mathbf R^{*\top}\mathbf R^*)\mathbf h_t
+
\mathbf R^{*\top}\mathbf R^*\mathbf h_t,
\]
we obtain
\[
\Phi_{\mathbf R^*}(\tilde{\mathbf h}_t)-\mathbf h_t
=
(\mathbf I-\mathbf R^{*\top}\mathbf R^*)\Delta_t
+
\mathbf R^{*\top}(\mathbf W^*\tilde{\mathbf h}_t+\mathbf b^*-\mathbf R^*\mathbf h_t).
\]
The first term lies in the orthogonal complement of the edited subspace, while the second lies inside the edited subspace. Since these two subspaces are orthogonal,
\[
\|\Phi_{\mathbf R^*}(\tilde{\mathbf h}_t)-\mathbf h_t\|^2
=
\|(\mathbf I-\mathbf R^{*\top}\mathbf R^*)\Delta_t\|^2
+
\|\mathbf R^{*\top}(\mathbf W^*\tilde{\mathbf h}_t+\mathbf b^*-\mathbf R^*\mathbf h_t)\|^2.
\]
Taking expectations yields the stated decomposition.
\end{proof}

The proposition highlights two distinct sources of transfer error when a ReFT module trained only at position $T$ is reused at an earlier position $t<T$. The first term on the right hand side measures perturbation components that lie outside the learned edited subspace and therefore cannot be removed by the reused low-rank intervention. The second term measures reconstruction mismatch within the edited subspace itself. Together, these terms show that even if the final-token ReFT module is effective at position $T$, its zero-shot reuse at earlier positions need not remain optimal. This provides additional support for applying the defense over a suffix window rather than only at the final token.

\newpage
\section{Estimated Compute Calculations}
\label{app:computation}

We estimate adversarial-training compute using dense-equivalent Kaplan-style FLOPs. Let \(N\) denote the dense parameter count of the full model and \(D\) denote the number of processed tokens. Following standard scaling-law  approximations \citep{kaplan2020scaling}, we estimate one dense forward pass as
\begin{equation*}
C_{\mathrm{fwd}} \approx 2ND,
\end{equation*}
and one dense forward-backward training pass with parameter gradient accumulation as
\begin{equation*}
C_{\mathrm{fwd+bwd}} \approx 6ND.
\end{equation*}
When a backward pass is used only to obtain gradients with respect to an input or latent perturbation, and no parameter gradients are accumulated, we charge
\begin{equation*}
C_{\mathrm{input\text{-}bwd}} \approx 2ND.
\end{equation*}
Thus, one forward pass plus one input-gradient backward pass costs approximately \(4ND\).

In the main table, we report normalized FLOPs per 512-token training example. These estimates are intended to compare training-time computation across methods. Unless otherwise stated, we use dense model parameter counts even when the implementation uses quantization or parameter-efficient wrappers.

\paragraph{R2D2.}
R2D2 first generates an offline adversarial pool and then fine-tunes the full model on the resulting data. We therefore decompose the cost into a full-model fine-tuning term and an offline search term:
\begin{equation*}
C_{\mathrm{R2D2}} = C_{\mathrm{train}} + C_{\mathrm{search}}.
\end{equation*}
The training term uses full-model fine-tuning,
\begin{equation*}
C_{\mathrm{train}} \approx 6ND_{\mathrm{train}}.
\end{equation*}
For the offline attack-generation term, if generating an attacked example requires \(n_{\mathrm{fwd}}\) forward passes and \(n_{\mathrm{bwd}}\) input-gradient backward passes over \(D_{\mathrm{search}}\) tokens, we charge
\begin{equation*}
C_{\mathrm{search}}=(2n_{\mathrm{fwd}}+2n_{\mathrm{bwd}})N D_{\mathrm{search}},
\end{equation*}
summed over all generated adversarial examples. Since this search is performed offline, we amortize it over normalized 512-token training examples:
\begin{equation*}
C_{\mathrm{R2D2\ step}}
=
\frac{C_{\mathrm{train}}+C_{\mathrm{search}}}{D_{\mathrm{train}}/512}.
\end{equation*}

\paragraph{LAT.}
For LAT, we use the recorded forward/backward pass counts from the training compute summaries. If one adversarial-training update uses \(n_{\mathrm{fwd}}\) full-model forward passes and \(n_{\mathrm{bwd}}\) full-model backward passes with parameter-gradient accumulation, the normalized per-step cost at sequence length \(T=512\) is
\begin{equation*}
C_{\mathrm{LAT}}=(2n_{\mathrm{fwd}}+4n_{\mathrm{bwd}})NT.
\end{equation*}

\paragraph{CAT.}
CAT performs a \(k\)-step continuous PGD inner attack and updates a LoRA
adapter. Each PGD step requires one dense forward pass and one input-gradient
backward pass through the dense model, so the inner-search cost is
\begin{equation*}
C_{\mathrm{search}} = 4kNT.
\end{equation*}
The subsequent training update uses one clean forward pass and one adversarial
forward pass through the dense model. Although only the LoRA adapter parameters
are updated, gradients must still be backpropagated through the frozen dense
network to reach the adapter modules. We therefore estimate the training-update
cost as
\begin{equation*}
C_{\mathrm{train}}
\approx
4NT + 2NT + 2N_{\mathrm{LoRA}}T
=
(6 + 2p_{\mathrm{LoRA}})NT,
\end{equation*}
where \(N_{\mathrm{LoRA}}\) is the number of trainable LoRA parameters and
\(p_{\mathrm{LoRA}} = N_{\mathrm{LoRA}}/N\). In our runs,
\(p_{\mathrm{LoRA}} < 1\%\), so this term is numerically close to \(6NT\).
Thus,
\begin{equation*}
C_{\mathrm{CAT}}
\approx
4kNT + (6 + 2p_{\mathrm{LoRA}})NT
\approx
(4k+6)NT.
\end{equation*}

\paragraph{Ours.}
Our method uses a pruned surrogate for the inner attack search and updates only a small ReFT intervention on the target model. Let \(N_{\mathrm{s}}\) denote the dense-equivalent surrogate parameter count and \(N_{\mathrm{tr}}\) denote the number of trainable intervention parameters. If the ReFT intervention is applied at layer \(l_{\mathrm r}\), the latent attack perturbation is applied at layer \(l_{\mathrm a}\), and the model has \(L_{\mathrm{all}}\) layers, define
\begin{equation*}
\rho_{l_{\mathrm r}}
=
\frac{L_{\mathrm{all}}-l_{\mathrm r}-1}{L_{\mathrm{all}}},
\qquad
\rho_{l_{\mathrm a}}
=
\frac{L_{\mathrm{all}}-l_{\mathrm a}-1}{L_{\mathrm{all}}}.
\end{equation*}
Here \(\rho_{l_{\mathrm r}}\) and \(\rho_{l_{\mathrm a}}\) approximate the fraction of layers traversed by backward computation after the intervention or attack layer.

The target-model update requires a clean forward pass, an adversarial forward pass, and a backward pass from the loss to the ReFT intervention. We estimate this cost as
\begin{equation*}
C_{\mathrm{target}}
\approx
4NT + 4\rho_{l_{\mathrm r}}NT + 4N_{\mathrm{tr}}T.
\end{equation*}
The final term accounts for gradient accumulation on the trainable intervention parameters and is typically negligible relative to dense model computation.

For the surrogate inner search, each PGD step uses a surrogate forward pass and a backward pass from the output to the latent attack layer. Therefore,
\begin{equation*}
C_{\mathrm{surrogate}}
\approx
2k(1+\rho_{l_{\mathrm a}})N_{\mathrm{s}}T.
\end{equation*}
The total per-step cost of our method is
\begin{equation*}
C_{\mathrm{ours}}
=
C_{\mathrm{target}}+C_{\mathrm{surrogate}}.
\end{equation*}

For surrogate runs where the exact dense-equivalent surrogate parameter count is not directly available, we approximate
\begin{equation*}
N_{\mathrm{s}}
\approx
N\left(1 - f_{\mathrm{MLP}}r_{\mathrm{p}}\right),
\end{equation*}
where \(f_{\mathrm{MLP}}\) is the fraction of dense model parameters in MLP blocks and \(r_{\mathrm{p}}\) is the pruning ratio of MLP neurons.

\end{document}